\documentclass[sigconf,screen,nonacm]{acmart}
\usepackage{dsfont}
\usepackage{url}
\usepackage{amsthm}
\usepackage{afterpage}
\usepackage{subcaption}
\usepackage[normalem]{ulem}
\useunder{\uline}{\ul}{}
\usepackage{graphicx}
\usepackage{textcomp}
\usepackage{xcolor}
\usepackage{balance} 
\usepackage{bbm}
\usepackage{amsmath,amsfonts}
\usepackage{booktabs}
\usepackage{tcolorbox}

\usepackage{epstopdf}
\usepackage{array}
\usepackage{booktabs}
\usepackage{multicol}
\usepackage{color}
\usepackage{xcolor}
\usepackage{colortbl}
\usepackage{xspace}
\usepackage{multirow}
\usepackage{pifont}
\usepackage{enumitem}
\usepackage{bbding}
\usepackage{hyperref}
\usepackage{makecell}
\usepackage{microtype}
\usepackage{fontawesome}
\usepackage{caption,setspace}
\usepackage[utf8]{inputenc}
\usepackage{utfsym}
\usepackage{diagbox}
\usepackage{makecell}
\newcommand{\thickhline}{\noalign{\hrule height 0.3pt}} 

\DeclareMathOperator*{\argmin}{arg\,min}

\AtBeginDocument{%
 }

\ccsdesc[500]{Computing methodologies~Unsupervised learning}
\keywords{Federated Graph Learning; Graph Clustering}

\usepackage[linesnumbered,ruled,vlined]{algorithm2e}

\newcommand{\ssymbol}[1]{^{\@fnsymbol{#1}}}

\SetKwInput{KwInput}{Input}
\SetKwInput{KwOutput}{Output}
\SetKwRepeat{Do}{do}{while}

\definecolor{darkred}{rgb}{0.55, 0.0, 0.0}
\definecolor{lightred}{rgb}{0.94, 0.5, 0.5}
\definecolor{rosybrown}{rgb}{0.74, 0.56, 0.56}
\definecolor{darkgreen}{rgb}{0.0, 0.39, 0.0}
\definecolor{skyblue}{rgb}{0.56, 0.93, 0.56}
\definecolor{royalblue}{rgb}{0.0, 0.0, 0.55}
\definecolor{lightblue}{rgb}{0.68, 0.85, 0.9}
\definecolor{skyblue}{rgb}{0.53, 0.81, 0.92}

\definecolor{gray}{rgb}{0.5, 0.5, 0.5}
\definecolor{forestgreen}{rgb}{0.13, 0.55, 0.13}
\definecolor{slateblue}{rgb}{0.42, 0.35, 0.80}
\definecolor{darkgray}{rgb}{0.33, 0.33, 0.33}
\definecolor{royalblue}{rgb}{0.25, 0.41, 0.88}

\SetCommentSty{mycommfont}

\usepackage{ulem}

\begin{document}

\title{Towards Adaptive Federated Graph Clustering: A Global Community-aware Contrastive Learning-based Approach}

\author{Yinlin Zhu}
\affiliation{%
\institution{Sun Yat-sen University}
\city{Guangzhou}
\country{China}}
\email{zhuylin27@mail2.sysu.edu.cn}

\author{Di Wu}
\authornote{Corresponding author}
\affiliation{
  \institution{Sun Yat-sen University}
  \city{Guangzhou}
  \country{China}
}
\email{wudi27@mail.sysu.edu.cn}

\author{Wang Luo}
\affiliation{%
\institution{Sun Yat-sen University}
\city{Guangzhou}
\country{China}}
\email{luow69@mail2.sysu.edu.cn}

\author{Guocong Quan}
\affiliation{%
\institution{Sun Yat-sen University}
\city{Guangzhou}
\country{China}}
\email{quangc@mail.sysu.edu.cn}

\author{Miao Hu}
\affiliation{%
\institution{Sun Yat-sen University}
\city{Guangzhou}
\country{China}}
\email{humiao5@mail.sysu.edu.cn}

\renewcommand{\shortauthors}{Yinlin Zhu, Di Wu, Wang Luo, Guocong Quan, Miao Hu}

\begin{abstract}
Federated graph learning (FGL) enables multiple clients to collaboratively train graph models without sharing their private graph data, providing a promising paradigm for mining knowledge from distributed graph repositories. While most existing FGL methods focus on supervised tasks, real-world graphs are often massive and unlabeled, making federated graph clustering an important yet still immature research direction. Notably, this task is particularly challenging due to the inherent subgraph heterogeneity across clients, which leads to client-specific community structures. 
In this work, we identify two critical limitations in existing federated graph clustering methods: (1) unrealistic pre-defined cluster cardinality assumptions and (2) incomplete inter-community separation. To address these challenges, we propose \textbf{AdaFGC}, an \underline{\textbf{Ada}}ptive \underline{\textbf{F}}ederated graph clustering framework based on \underline{\textbf{G}}lobal community-aware \underline{\textbf{C}}ontrastive learning. AdaFGC introduces an over-complete set of global community anchors to model the global community structure and adaptively estimate clustering cardinality via cross-client anchor refinement. In addition, it employs a global community-aware contrastive learning scheme that uses the shared anchors as contrastive prototypes to explicitly enforce community-level attraction and repulsion across clients, complemented by node-level and topology-level objectives that stabilize local representations. 
Extensive experiments on eight benchmark datasets demonstrate that AdaFGC consistently outperforms existing supervised and unsupervised FGL baselines across multiple clustering metrics. 
\end{abstract}

 \maketitle

\section{Introduction}
\label{sec: introduction}

    Recent advances in computational capabilities have catalyzed a data-centric paradigm shift in deep learning, where the community increasingly prioritizes distilling knowledge from expansive and diverse datasets rather than relying solely on architectural innovations~\cite{gpt4, sora}. This trend has naturally extended to the graph domain: the focus is shifting from training graph neural networks (GNNs) on centrally collected graph data to federated graph learning (FGL), which enables multiple clients to collaboratively train graph models by exchanging model updates instead of raw data~\cite{fedpub, fedsage_plus}. As a result, FGL unlocks collective intelligence from distributed graph sources while respecting the privacy constraints and competition concerns inherent in data silos.
    
    Within this landscape, various FGL methods have emerged~\cite{openfgl, data_centric_fgl_survey, fgl_survey}, yet they predominantly target supervised objectives such as node classification~\cite{fgssl, fggp} and graph classification~\cite{gcfl_plus, fedstar}. Meanwhile, real-world graph repositories are often massive and unlabeled, leaving substantial structural and semantic knowledge under-exploited by label-dependent pipelines. To fill this gap, a few recent studies have explored federated graph clustering~\cite{fedgcn, fedncn, fedpka}, which offers a natural unsupervised alternative for mining latent community structures from distributed unlabeled graphs. However, federated graph clustering is fundamentally more challenging than its centralized counterpart. Due to the well-known subgraph heterogeneity in FGL~\cite{fedtad, fgl_unlearning}, local subgraphs across clients typically exhibit non-IID node attribute distributions and topology patterns, further inducing client-specific community structures. Despite recent efforts, existing federated graph clustering methods still suffer from the following two key limitations.
    
    \textbf{L1: Unrealistic Pre-defined Cluster Cardinality.} In realistic FGL settings, each client only observes an unlabeled local subgraph and therefore cannot determine an appropriate local cluster number; similarly, the server cannot determine the global cluster cardinality a priori. Yet existing methods~\cite{fedgcn, fedncn} typically set the global cluster number to the dataset-level ground-truth class count and local cluster numbers to client-level true class counts, which rarely holds in practice and severely limit applicability. Effective mechanisms for adaptively estimating both local and global cluster cardinalities from cross-client knowledge remain absent.
    
    \textbf{L2: Incomplete Inter-community Separation.} Existing federated graph clustering methods primarily focus on discovering similar communities across clients and enhancing their cross-client consistency by aligning node features and topological patterns. For example, FedNCN~\cite{fedncn} matches similar cross-client communities and repairs links between them. However, these approaches largely overlook the explicit separation between dissimilar communities. In contrast, extensive studies on contrastive learning in both vision~\cite{clip, bt_cv_contrastive} and graph domains~\cite{graph_con_1, graph_con_2} have consistently shown that both positive and negative signals are essential for learning discriminative representations. Without sufficient cross-client discriminative guidance from the server, local cluster boundaries remain ambiguous, leading to suboptimal clustering performance.

    Building upon these insights, we propose \textbf{AdaFGC} (\textbf{\underline{Ada}}ptive \textbf{\underline{F}}ederated \textbf{\underline{G}}raph \textbf{\underline{C}}lustering via Global Community-aware Contrastive Learning).
    To address \textbf{L1}, AdaFGC first performs over-complete global community anchor initialization: each client produces over-complete local cluster centroids, and the server aggregates them via a second-stage clustering to form the initial anchor set. In subsequent rounds, each client conducts global community-aware clustering by assigning local nodes to the nearest global community anchors, while the server performs global community anchor evolution, guiding similar anchors toward the same local cluster centroids and merging anchors that become sufficiently close. This adaptive loop progressively refines the clustering cardinality, enabling both the server and clients to discover community structures that better reflect the underlying data distributions.
    To address \textbf{L2}, each client performs global community-aware contrastive learning on its local subgraph, optimizing three complementary objectives at the community level, node level, and topology level. These objectives leverage global community anchors, cross-view node instances, and topological neighborhoods, respectively, to jointly enhance intra-community cohesion and inter-community separation across clients, thereby yielding more discriminative node representations for improved federated clustering.

    \textbf{Our Contributions:} 
    (1) \textbf{Valuable Insights.} We revisit federated graph clustering and identify two core bottlenecks: unrealistic pre-defined cluster cardinality (\textbf{L1}) and incomplete inter-community separation across clients (\textbf{L2}).
    (2) \textbf{Novel Method.} We propose AdaFGC, which introduces over-complete global community anchors with adaptive anchor evolution to estimate clustering cardinality, together with a global community-aware contrastive learning scheme that uses the shared anchors as contrastive prototypes to enforce global community-level attraction and repulsion, complemented by node-level and topology-level objectives to stabilize local representations.
    (3) \textbf{State-of-the-art Performance.} Extensive experiments on eight datasets demonstrate that AdaFGC consistently outperforms existing supervised and unsupervised FGL baselines across four clustering metrics.

\section{Preliminaries and Related Works}
\label{sec: preliminaries}

\textbf{Problem Formulation.} 
We study the problem of federated node-level clustering. Consider a system of $K$ clients, where the $k$-th client holds a private undirected subgraph $G^k=(\mathcal{V}^k, \mathcal{E}^k)$ derived from an implicit global graph $G_g=(\mathcal{V}_g, \mathcal{E}_g)$, i.e., $\mathcal{V}^k \subseteq \mathcal{V}_g$ and $\mathcal{E}^k \subseteq \mathcal{E}_g$. Each node $v_i \in \mathcal{V}^k$ is associated with an attribute vector $\boldsymbol{x}_i^k$ and a ground-truth label $y_i^k$; labels are withheld during training and used solely for evaluation. Under the FedAvg~\cite{fedavg} aggregation strategy, each communication round $t$ proceeds in three stages:
(1) \textit{Client Selection}: the server randomly selects a subset $\mathcal{S}$ of clients to participate in the current round.
(2) \textit{Local Training and Clustering}: each selected client $k$ downloads the global encoder parameters $\hat{\mathbf{W}}$ and obtains node embeddings via the GNN encoder parameterized, after which a clustering module assigns nodes to latent clusters. The client optimizes an unsupervised objective $\mathcal{L}(G^k)$ (e.g., graph reconstruction or contrastive loss) to jointly refine representations and cluster assignments.
(3) \textit{Global Aggregation}: the server collects updated parameters $\{\mathbf{W}^k\}_{k\in \mathcal{S}}$ and computes a weighted average:
$\hat{\mathbf{W}} = \sum_{k\in \mathcal{S}} \frac{|\mathcal{V}^k|}{N} \mathbf{W}^k$,
where $N = \sum_{k\in \mathcal{S}}|\mathcal{V}^k|$ denotes the total number of nodes across selected clients.

\vspace{+0.1cm}
\noindent \textbf{Attributed Graph Clustering.} 
Leveraging the strong representational power of GNNs on graph-structured data, node-level clustering has achieved substantial progress in recent years~\cite{liu2022efficient, li2022high, gong2022deep}. CCGC~\cite{yang2023cluster} constructs dual views of a complete graph and employs a Siamese encoder to guide positive and negative pair generation, thereby improving clustering quality. MAGC~\cite{lin2021multi} exploits complementary multi-view information through adaptive weighting to capture both consistent and discriminative relationships. AMGC~\cite{tu2024attribute} further proposes a unified framework that alternately optimizes clustering and attribute imputation on a single graph. Additional studies can be found in recent surveys~\cite{graph_clustering_survey1, graph_clustering_survey2}.

\vspace{+0.1cm}
\noindent \textbf{Federated Graph Learning.} 
Federated graph learning (FGL) has emerged as a promising paradigm for training GNNs under data-privacy constraints~\cite{openfgl, fgl_survey, data_centric_fgl_survey, fedgfm}. Most existing FGL methods focus on supervised tasks such as node classification and graph classification. FedSage+~\cite{fedsage_plus} designs a missing-neighbor generator using node labels to address the missing-link problem in distributed subgraph systems. FedPUB~\cite{fedpub} constructs a global random graph to obtain multiple functional embeddings and alleviate multi-source heterogeneity. FedTAD~\cite{fedtad} evaluates the reliability of node-class knowledge in a topology-aware manner and guides pseudo-graph generation for improved classification. Recent methods~\cite{fedspa, fedssp, power} have also demonstrated competitive performance.

However, real-world graph repositories are typically large-scale and sparsely labeled, leaving considerable structural and semantic information under-exploited by label-dependent pipelines. Motivated by this gap, a few recent studies have begun to explore federated graph clustering. FedGCN~\cite{fedgcn} proposes an unsupervised framework for federated graph-level clustering that uploads structure-oriented cluster prototypes and generates global consensus prototypes via Gaussian estimation. FedNCN~\cite{fedncn} focuses on federated node-level clustering and introduces a clustering projector that produces privacy-preserving counterparts while retaining cluster-specific characteristics. Nevertheless, these approaches remain limited by \textit{unrealistic pre-defined cluster cardinality} and \textit{incomplete inter-community separation}, as discussed in Sec.~\ref{sec: introduction}.

\begin{figure*}[htb]
 \centering
  \includegraphics[width=0.998\textwidth]{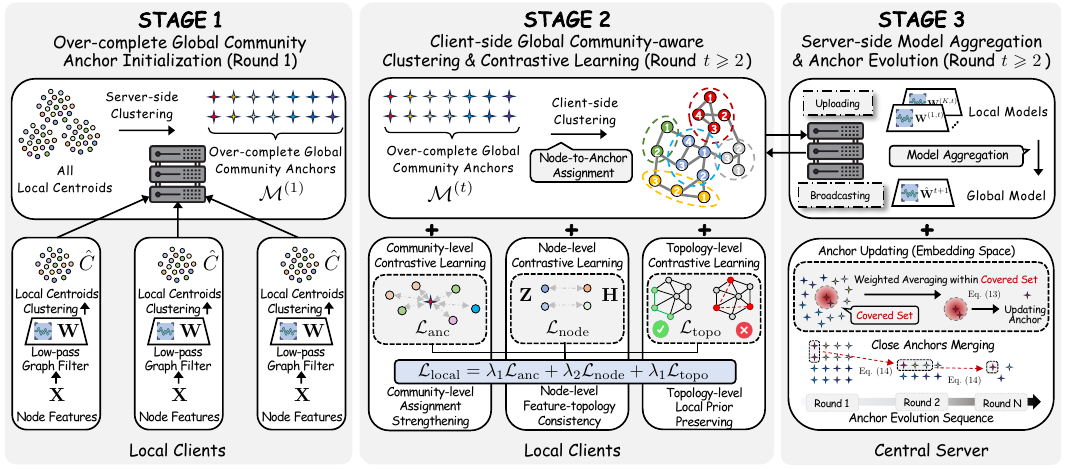}
  \caption{Overview of the AdaFGC framework. AdaFGC maintains an over-complete yet adaptive global community anchor space to handle unknown cluster cardinality, and optimizes cluster boundaries with global community-aware contrastive signals.}
\label{fig: framework}
\end{figure*}

\section{Methodology}
\label{sec: method}

In this section, we present \textbf{AdaFGC}, a federated graph clustering framework designed to address the two bottlenecks identified in Sec.~\ref{sec: introduction}: unrealistic pre-defined cluster cardinality (\textbf{L1}) and incomplete inter-community separation (\textbf{L2}). An overview of AdaFGC is provided in Fig.~\ref{fig: framework}. We detail the three core components below: over-complete global community anchor initialization (Sec.~\ref{sec: Over-complete Global Community Anchor Initialization}), global community-aware clustering and contrastive learning (Sec.~\ref{sec: Local Clustering and Contrastive Training}), and server-side anchor evolution and model aggregation (Sec.~\ref{sec: Global Community Anchor Evolution}).

\subsection{Over-complete Global Community Anchor Initialization}
\label{sec: Over-complete Global Community Anchor Initialization}

To tackle \textbf{L1}, AdaFGC avoids committing to a fixed cluster cardinality at either the client or server side. Instead, in the first communication round, we initialize an over-complete set of global community anchors and let subsequent rounds adaptively prune redundancy. Let $\hat{C}$ denote the initial anchor count, where $\hat{C}\gg C$ and $C$ is the unknown true number of communities. By choosing $\hat{C}$ sufficiently large, we ensure coverage of all potential communities while deferring the elimination of redundant anchors to later rounds.

Specifically, for client $k$ with local graph $G^k=(\mathcal{V}^k,\mathcal{E}^k)$ and feature matrix $\mathbf{X}^k\in\mathbb{R}^{|\mathcal{V}^k|\times F}$, we first encode the node features to obtain feature-view embeddings $\mathbf{Z}^k$, and then apply low-pass graph filtering to produce topology-view embeddings $\mathbf{H}^k$:
\begin{equation}
    \label{eq: forward}
    \mathbf{Z}^k = \mathbf{X}^k\mathbf{W}^k, \quad \mathbf{H}^k = \frac{1}{\alpha+1}\sum_{l=0}^L\Big(\frac{\alpha}{\alpha+1} \tilde{\mathbf{A}}\Big)^l \mathbf{Z}^k,
\end{equation}
where $\mathbf{W}^k \in \mathbb{R}^{F\times d}$ denotes local encoder parameters initialized from the broadcast global encoder $\hat{\mathbf{W}}^{(1)}$, $F$ and $d$ are the input and latent feature dimensions respectively, $\alpha>0$ controls smoothing strength, $\tilde{\mathbf{A}}$ is the symmetrically normalized adjacency matrix, and $L$ is the propagation depth. As we show in Theorem~\ref{thm: smooth fused rep}, many widely used GNNs (e.g., GCN, SGC~\cite{gcn, sgc}) can be viewed as special cases of this graph-filtering formulation.

We then apply K-Means~\cite{mcqueen1967some} on $\mathbf{H}^k$ to produce $\hat{C}$ local centroids:
\begin{equation}
\label{eq: local_centroids}
\mathcal{C}^k=\{\mathbf{c}^{k}_1,\ldots,\mathbf{c}^{k}_{\hat{C}}\}.
\end{equation}
All $K$ clients upload their local centroid sets to the server, forming a global pool of $K\hat{C}$ candidates. The server then performs a second-stage clustering on this pool to initialize $\hat{C}$ global community anchors, which can be formulated as:
\begin{equation}
\label{eq: global_anchors_init}
\mathcal{M}^{(1)}=\{\mathbf{m}_1^{(1)},\ldots,\mathbf{m}_{\hat{C}}^{(1)}\}.
\end{equation}
This over-complete initialization intentionally favors coverage over precision; subsequent rounds progressively merge similar anchors using multi-client clustering signals (Sec.~\ref{sec: Global Community Anchor Evolution}).

\subsection{Client Side: Global Community-aware Clustering and Contrastive Learning}
\label{sec: Local Clustering and Contrastive Training}

From round $t \geqslant 2$, AdaFGC coordinates clients and the server for local clustering, local optimization, and server-side anchor evolution. We first describe the client-side procedure.

\vspace{+0.1cm}
\noindent \textbf{Global Community-aware Clustering.} The server broadcasts the global encoder weights $\hat{\mathbf{W}}^{(t)}$ and anchor set $\mathcal{M}^{(t)}=\{\mathbf{m}^{(t)}_1,\ldots,\mathbf{m}^{(t)}_{\hat{C}}\}$ to each client. Client $k$ computes $\mathbf{Z}^k$ and $\mathbf{H}^k$ via Eq.~\eqref{eq: forward} and assigns each node $v_i \in \mathcal{V}^k$ to its nearest global community anchor:
\begin{equation}
\label{eq: node_assign}
a_i^k=\operatorname*{argmin}_{j\in\{1,\ldots,\hat{C}\}}\|\mathbf{h}^{k}_i-\mathbf{m}^{(t)}_j\|_2^2.
\end{equation}

The set of nodes assigned to the $j$-th anchor is denoted:
\begin{equation}
\label{eq: cluster_set}
\mathcal{B}^{k}_j=\{v_i\in\mathcal{V}^k\mid a_i^k=j\},
\end{equation}
and the corresponding local cluster centroid is computed as:
\begin{equation}
\label{eq: local_centroid}
\mathbf{c}^{k}_j=
\begin{cases}
\frac{1}{|\mathcal{B}_j^k|}\sum_{i\in\mathcal{B}^{k}_{j}}\mathbf{h}_i^k, & |\mathcal{B}^{k}_{j}|>0,\\
\mathbf{m}^{(t)}_j, & |\mathcal{B}^{k}_{j}|=0,
\end{cases}
\end{equation}
where $\mathbf{c}_j^k$ serves as a client-specific update signal for the $j$-th anchor, encoding how local feature and topological patterns reshape the shared global community structure.

\vspace{+0.1cm}
\noindent \textbf{Global Community-aware Contrastive Learning.} 
To address \textbf{L2}, we design a contrastive learning scheme centered on the shared global community anchors. The key idea is to explicitly enforce community-level attraction and repulsion via the anchors, while maintaining local representation consistency through auxiliary contrastive objectives. Concretely, we optimize three complementary losses operating at different granularities.

(1) \textit{Community-level Contrastive Loss.}
For each anchor $j$, the assigned nodes form the set $\mathcal{B}_j^k$, from which we select the nearest $\beta\%$ nodes (denoted $\mathcal{T}_j^k \subset \mathcal{B}_j^k$) as high-confidence positives. The assigned anchor $\mathbf{m}_j^{(t)}$ serves as the positive prototype, while the remaining anchors act as negatives. This objective injects community-level supervision from the global structure, pulling same-community nodes toward their anchor while repelling different communities apart. Specifically, letting $\psi(\mathbf{u},\mathbf{v})=\exp(\mathrm{cos}(\mathbf{u},\mathbf{v})/\tau)$ with temperature $\tau$, the community-level loss for anchor $j$ is:
\begin{equation}
\label{eq: loss_anc}
\mathcal{L}_{\mathrm{anc}}^{(j)}=-\sum_{v_i\in\mathcal{T}_j^k}\log 
\frac{\psi(\mathbf{h}_i^k,\mathbf{m}_j^{(t)})}
{\psi(\mathbf{h}_i^k,\mathbf{m}_j^{(t)})+\sum_{j'\neq j}\psi(\mathbf{m}_j^{(t)},\mathbf{m}_{j'}^{(t)})}.
\end{equation}
The full community-level loss is $\mathcal{L}_{\mathrm{anc}}=\sum_{j=1}^{\hat{C}}\mathcal{L}_{\mathrm{anc}}^{(j)}$.

(2) \textit{Node-level Contrastive Loss.}
To stabilize representations, we enforce instance-level consistency between the feature view $\mathbf{Z}^k$ and the topology view $\mathbf{H}^k$. For each node $v_i$, this objective pulls together the two views of the same node while contrasting them against other nodes, which is formulated as follows:
\begin{equation}
\label{eq: loss_node}
\mathcal{L}_{\mathrm{node}}^{(i)}=-\log 
\frac{\psi(\mathbf{z}_i^k,\mathbf{h}_i^k)}
{\sum_{v_j\in\mathcal{V}^k}\psi(\mathbf{z}_i^k,\mathbf{h}_j^k)},
\end{equation}
and the full node-level loss is $\mathcal{L}_{\mathrm{node}}=\sum_{v_i\in\mathcal{V}^k}\mathcal{L}_{\mathrm{node}}^{(i)}$.

(3) \textit{Topology-level Contrastive Loss.}
To further preserve local structural knowledge, we introduce a topology-level objective based on random walks. For each node $v_i$, the visited set $\mathcal{P}_i^k=\{p_1,\ldots,p_r\}$ serves as structural positives, while $\mathcal{N}_i^k=\{n_1,\ldots,n_r\}$ is sampled from $\mathcal{V}^k\setminus\mathcal{P}_i^k$ as negatives, formulated as:
\begin{equation}
\label{eq: loss_topo}
\mathcal{L}_{\mathrm{topo}}^{(i)}=-\log 
\frac{\sum_{v_j\in\mathcal{P}_i^k}\psi(\mathbf{h}_i^k,\mathbf{h}_j^k)}
{\sum_{v_j\in\mathcal{P}_i^k}\psi(\mathbf{h}_i^k,\mathbf{h}_j^k)+\sum_{v_j\in\mathcal{N}_i^k}\psi(\mathbf{h}_i^k,\mathbf{h}_j^k)}.
\end{equation}

\noindent The full topology-level loss is $\mathcal{L}_{\mathrm{topo}}=\sum_{v_i\in\mathcal{V}^k}\mathcal{L}_{\mathrm{topo}}^{(i)}$.

Finally, the overall local objective combines the three losses:
\begin{equation}
\label{eq: loss_local}
\mathcal{L}_{\mathrm{local}}=
\lambda_1\mathcal{L}_{\mathrm{anc}}+
\lambda_2\mathcal{L}_{\mathrm{node}}+
\lambda_3\mathcal{L}_{\mathrm{topo}},
\end{equation}
where $\lambda_1$, $\lambda_2$, and $\lambda_3$ are balancing coefficients for the community-level, node-level, and topology-level contrastive losses, respectively.
After local optimization, each client uploads the assignment counts $\{|\mathcal{B}^{k}_j|\}_{j=1}^{\hat{C}}$, the local centroids $\{\mathbf{c}^{k}_j\}_{j=1}^{\hat{C}}$, and the updated parameters $\mathbf{W}^{k,(t)}$ to the server for anchor evolution (Sec.~\ref{sec: Global Community Anchor Evolution}).

\subsection{Server Side: Model Aggregation and Global Community Anchor Evolution}
\label{sec: Global Community Anchor Evolution}

The over-complete initialization in Sec.~\ref{sec: Over-complete Global Community Anchor Initialization} ensures sufficient coverage but inevitably introduces redundancy, since $\hat{C} \gg C$. Consequently, nodes belonging to the same community may be split across multiple anchors due to minor pattern variations.
To progressively resolve \textbf{L1}, the server refines the anchors using the local centroids and assignment statistics uploaded by the clients.

\vspace{+0.1cm}
\noindent \textbf{Model Aggregation.}
The server first aggregates encoder parameters from participating clients via specific FL aggregation strategy. Without loss of generality, we take FedAvg~\cite{fedavg} as an example:
\begin{equation}
\label{eq: fedavg}
\hat{\mathbf{W}}^{(t+1)}=
\sum_{k\in\mathcal{S}}
\frac{|\mathcal{V}^k|}
{\sum_{l\in\mathcal{S}}|\mathcal{V}^{l}|}
\mathbf{W}^{k,(t)},
\end{equation}
where $\mathcal{S}$ denotes the participating client set at round $t$.

\vspace{+0.1cm}
\noindent \textbf{Global Community Anchor Evolution.} 
Each client uploads centroids $\{\mathbf{c}^{k}_j\}_{j=1}^{\hat{C}}$ and assignment sizes $\{|\mathcal{B}^{k}_j|\}_{j=1}^{\hat{C}}$ for the current anchor set $\mathcal{M}^{(t)}=\{\mathbf{m}_1^{(t)},\ldots,\mathbf{m}_{\hat{C}}^{(t)}\}$. The server evolves the anchors with two objectives:
(1) each anchor gradually captures the underlying distribution of local graph data based on multi-client clustering results;
(2) anchors corresponding to the same community progressively converge and are merged when sufficiently similar, thereby eliminating redundancy. Formally, for each local centroid $\mathbf{c}^{k}_j$ with assigned anchor $\mathbf{m}_j^{(t)}$, we define its covered anchor set:
\begin{equation}
\label{eq: covered_set}
\mathcal{P}(\mathbf{m}_j^{(t)}  | \mathbf{c}_j^k) = \{\mathbf{m}_p^{(t)}\in \mathcal{M}^{(t)}\mid\|\mathbf{m}_p^{(t)}-\mathbf{m}_j^{(t)}\|_2 \leqslant \|\mathbf{c}^{k}_j-\mathbf{m}_j^{(t)}\|_2\},
\end{equation}
which includes all anchors close enough to the local centroid to be jointly aligned. The directly assigned anchor $\mathbf{m}_j^{(t)}$ is always included in $\mathcal{P}(\mathbf{m}_j^{(t)} | \mathbf{c}_j^k)$. Letting $w_j^k = |\mathcal{B}_j^k|$, anchor $\mathbf{m}_j^{(t)}$ is updated as a weighted average over all centroids whose covered sets contain it, which can be formulated as follows:

\begin{equation}
\label{eq: anchor_update}
\mathbf{m}_j^{(t)} \;\; \leftarrow \;\;
(1-\eta)\, \mathbf{m}_j^{(t)} \; + \;
\eta \, 
\frac{
\sum_{k=1}^K \sum_{p=1}^{\hat{C}} \mathds{1}\big(\mathbf{m}_j^{(t)} \in \mathcal{P}(\mathbf{m}_p^{(t)}|\mathbf{c}_p^k)\big) \, w_p^k \, \mathbf{c}_p^k
}{
\sum_{k=1}^K \sum_{p=1}^{\hat{C}} \mathds{1}\big(\mathbf{m}_j^{(t)} \in \mathcal{P}(\mathbf{m}_p^{(t)}|\mathbf{c}_p^k)\big) \, w_p^k
}.
\end{equation}
To avoid order effects, all anchor updates are applied synchronously.

Finally, to complete the resolution of \textbf{L1}, the server merges any pair of anchors that are sufficiently close. Specifically, anchors $\mathbf{m}_i$ and $\mathbf{m}_j$ are merged if $
\|\mathbf{m}_i - \mathbf{m}_j\|_2 < \epsilon$, where $\epsilon$ controls the merging sensitivity, which can be formulated as follows:
\begin{equation}
\label{eq: anchor_merge}
\mathbf{m}_{i \oplus j} = \frac{\mathbf{m}_i + \mathbf{m}_j}{2}.
\end{equation}
Each merge reduces $\hat{C}$ by one. Through iterative client-side clustering and optimization, coupled with server-side aggregation and anchor evolution, AdaFGC progressively refines the initially over-complete anchor pool into a compact and discriminative global community structure. The complete procedure of AdaFGC is presented in Algorithm~\ref{alg: adafgc}.

\normalem
\begin{algorithm}[htbp]\fontsize{8pt}{4pt}\selectfont
\DontPrintSemicolon
\SetAlgoLined
\caption{Overall Procedure of AdaFGC}
\label{alg: adafgc}

\KwInput{
$K$ clients with local graphs $\{G^k=(\mathcal{V}^k,\mathcal{E}^k)\}_{k=1}^K$ and features $\{\mathbf{X}^k\}$;
initial anchor count $\hat{C}\gg C$;
communication rounds $T$;
local epochs $E$;
smoothing parameter $\alpha$; propagation depth $L$;
loss weights $\lambda_1,\lambda_2,\lambda_3$;
confidence ratio $\beta$; merge threshold $\epsilon$; momentum $\eta$.
}

\KwOutput{Node embeddings $\{\mathbf{H}^k\}_{k=1}^K$ and global community anchors $\mathcal{M}$.}

\tcc{Phase 1: Over-complete Global Community Anchor Initialization (Round $t=1$)}
Server initializes encoder $\hat{\mathbf{W}}^{(1)}$ and broadcasts to all clients\;
\For{each client $k=1,\ldots,K$ \textbf{in parallel}}{
    compute $\mathbf{Z}^k, \mathbf{H}^k$ via Eq.~\eqref{eq: forward}\;
    $\mathcal{C}^k = \text{K-Means}(\mathbf{H}^k, \hat{C})$ via Eq.~\eqref{eq: local_centroids} \tcp*{$\hat{C}$ local centroids}
    upload $\mathcal{C}^k$ to server\;
}
Server clusters $\bigcup_{k=1}^K \mathcal{C}^k$ ($K\hat{C}$ candidates) into $\hat{C}$ global anchors
$\mathcal{M}^{(1)}$ via Eq.~\eqref{eq: global_anchors_init}\;

\tcc{Phase 2: Iterative Federated Clustering and Anchor Evolution (Rounds $t\geqslant 2$)}
\For{$t=2$ to $T$}{
    Server broadcasts $\hat{\mathbf{W}}^{(t)}$ and $\mathcal{M}^{(t)}$ to all clients\;

    \tcc{Client-side: Clustering and Contrastive Optimization}
    \For{each client $k\in\mathcal{S}$ \textbf{in parallel}}{
        $\mathbf{W}^k \leftarrow \hat{\mathbf{W}}^{(t)}$\;
        \For{$e=1$ to $E$}{
            compute $\mathbf{Z}^k,\mathbf{H}^k$ via Eq.~(\ref{eq: forward})\;
            \tcc{Global Community-aware Clustering}
            \For{each node $v_i\in\mathcal{V}^k$}{
                assign $a_i^k$ via Eq.~\eqref{eq: node_assign}\;
            }
            compute cluster sets $\{\mathcal{B}_j^k\}$ via Eq.~\eqref{eq: cluster_set} and local centroids $\{\mathbf{c}_j^k\}$ via Eq.~\eqref{eq: local_centroid}\;
            \tcc{Global Community-aware Contrastive Learning}
            select high-confidence positives $\mathcal{T}_j^k$ (top $\beta$\% nearest nodes per anchor)\;
            compute $\mathcal{L}_{\mathrm{anc}}$ via Eq.~\eqref{eq: loss_anc}: community-level contrastive loss\;
            compute $\mathcal{L}_{\mathrm{node}}$ via Eq.~\eqref{eq: loss_node}: node-level contrastive loss between $\mathbf{Z}^k$ and $\mathbf{H}^k$\;
            sample random-walk positives $\mathcal{P}_i^k$ and negatives $\mathcal{N}_i^k$\;
            compute $\mathcal{L}_{\mathrm{topo}}$ via Eq.~\eqref{eq: loss_topo}: topology-level contrastive loss\;
            $\mathcal{L}_{\mathrm{local}} = \lambda_1\mathcal{L}_{\mathrm{anc}} + \lambda_2\mathcal{L}_{\mathrm{node}} + \lambda_3\mathcal{L}_{\mathrm{topo}}$ via Eq.~\eqref{eq: loss_local}\;
            update $\mathbf{W}^k$ by minimizing $\mathcal{L}_{\mathrm{local}}$\;
        }
        upload $\{|\mathcal{B}_j^k|\}_{j=1}^{\hat{C}}$, $\{\mathbf{c}_j^k\}_{j=1}^{\hat{C}}$, $\mathbf{W}^k$ to server\;
    }

    \tcc{Server-side: Model Aggregation}
    $\hat{\mathbf{W}}^{(t+1)} = \sum_{k\in\mathcal{S}} \frac{|\mathcal{V}^k|}{\sum_{l\in\mathcal{S}}|\mathcal{V}^l|} \mathbf{W}^{k}$ via Eq.~\eqref{eq: fedavg} \tcp*{FedAvg}

    \tcc{Server-side: Global Community Anchor Evolution}
    \For{each anchor $\mathbf{m}_j^{(t)}\in\mathcal{M}^{(t)}$}{
        compute covered sets $\mathcal{P}(\mathbf{m}_j^{(t)}|\mathbf{c}_p^k)$ via Eq.~\eqref{eq: covered_set} for all clients $k$ and centroids $p$\;
        update $\mathbf{m}_j^{(t)}$ via Eq.~\eqref{eq: anchor_update}
        \tcp*{synchronous update}
    }

    \tcc{Server-side: Anchor Merging}
    \For{each pair $(\mathbf{m}_i, \mathbf{m}_j)$ with $\|\mathbf{m}_i - \mathbf{m}_j\|_2 < \epsilon$}{
        merge $\mathbf{m}_{i\oplus j} = (\mathbf{m}_i + \mathbf{m}_j) / 2$ via Eq.~\eqref{eq: anchor_merge}; \quad
        $\hat{C} \leftarrow \hat{C} - 1$\;
    }
    update $\mathcal{M}^{(t+1)}$\;
}

\Return{$\{\mathbf{H}^k\}_{k=1}^K$, $\mathcal{M}^{(T)}$}\;

\end{algorithm}
\ULforem

\section{Experiments}

In this section, we first describe the experimental setup, with full reproducibility details deferred to Appendix~\ref{appendix: dataset details} and Appendix~\ref{appendix: more experimental setups}. 
We then conduct comprehensive empirical evaluations to answer the following research questions:
\textbf{Q1}: Does AdaFGC outperform existing methods for federated graph clustering?
\textbf{Q2}: How well does AdaFGC reveal meaningful community structures across clients?
\textbf{Q3}: What are the individual contributions of each component in AdaFGC?
\textbf{Q4}: How sensitive is AdaFGC to key hyperparameters?
\textbf{Q5}: What are the time and memory overheads, and how efficient is convergence?
\textbf{Q6}: How robust is AdaFGC under noisy settings? 
Moreover, additional results on varying numbers of participating clients and non-graph baselines are provided in Appendix~\ref{appendix: more experiments}.

\subsection{Experimental Setup}
\label{sec: exp setup}

\begin{table}[htbp]
    \setlength{\abovecaptionskip}{0.2cm}
    \setlength{\belowcaptionskip}{-0.2cm}
    \centering
    \caption{\textbf{Statistics of the used datasets.}}
    \label{tab: ablation}
    \footnotesize 
    \renewcommand{\arraystretch}{1.1}
    \resizebox{\linewidth}{!}{
    \setlength{\tabcolsep}{2.5mm}{
    \label{tab: datasets}
\begin{tabular}{lccccc}
\toprule
\textbf{Dataset} & \textbf{Nodes} & \textbf{Edges} & \textbf{Dimensions} & \textbf{Classes} & \textbf{Domain} \\ \midrule
CiteSeer        & 3,327  & 9,104   & 3,703 & 6   & Citation         \\
PubMed          & 19,717 & 88,648  & 500  & 3   & Citation         \\
Amazon-Computer & 13,752 & 491,722 & 767  & 10  & Product \\
Amazon-Photo    & 7,650  & 238,162 & 754  & 8  & Product    \\
Questions       & 48,921 & 153,540 & 301  & 2  & Q\&A System      \\ 
ogb-arxiv      &  169,343 & 1,166,243 &  128 & 40 & Citation \\
ogb-products   & 2,449,029 & 61,859,140 & 100 & 47 & Product\\
Reddit &  232,965 & 23,213,838 &  602 & 41 & Social \\
\bottomrule

\end{tabular}
    }}
\end{table}

\vspace{+0.1cm}
\noindent \textbf{Datasets and Simulation Strategy.}
We evaluate AdaFGC on eight benchmark datasets across diverse domains: three citation networks (CiteSeer, PubMed~\cite{Yang16cora}, ogb-arxiv~\cite{hu2020ogb}), three co-purchase networks (Amazon-Computer, Amazon-Photo~\cite{shchur2018amazon_datasets}, ogb-products~\cite{hu2020ogb}), one Q\&A network (Questions~\cite{platonov2023hete_gnn_survey4}), and one social network (Reddit~\cite{hu2020ogb}). For federated node clustering, we split each graph into 10 client subgraphs using the Louvain algorithm~\cite{louvain}, a standard modularity-based method in subgraph FL benchmarks~\cite{openfgl}. Table~\ref{tab: datasets} summarizes dataset statistics, and Appendix~\ref{appendix: dataset details} provides further details.

\vspace{+0.1cm}
\noindent \textbf{Baseline Methods.}
We compare AdaFGC against two groups of baselines:
(1)~\textit{Supervised FGL methods}: FedSage+~\cite{fedsage_plus}, FedPUB~\cite{fedpub}, FedTAD~\cite{fedtad}, FedGTA~\cite{fedgta}, FedIIH~\cite{fediih}, FGSSL~\cite{fgssl}, FGGP~\cite{fggp}, FedSPA~\cite{fedspa}, and S2FGL~\cite{s2fgl};
(2)~\textit{Unsupervised FGL methods}: FedNCN~\cite{fedncn}. Notably, for supervised FGL methods, when label-dependent computations are required, we replace the true labels with pseudo labels obtained from local clustering.

\vspace{+0.1cm}
\noindent \textbf{Clustering Metrics.}
We adopt four standard clustering evaluation metrics: Clustering-Accuracy (ACC), Normalized Mutual Information (NMI), Adjusted Rand Index (ARI), and F1 Score (F1). Formal definitions are given in Appendix~\ref{appendix: evaluation metrics}.

\begin{table*}[htbp]
    \setlength{\abovecaptionskip}{0.2cm}
    \setlength{\belowcaptionskip}{-0.2cm}
    \centering
    \caption{\textbf{Node clustering performance} on five datasets (10 clients). The best, second best and third best results are highlighted in \textcolor{darkred}{\textbf{red}}, \textcolor{royalblue}{\textbf{blue}} and \textcolor{orange}{\textbf{orange}}, respectively. `*' denotes the unsupervised adaptions for supervised algorithms (refer to our baseline setting in Sec.~\ref{sec: exp setup}).}
    \label{tab: main}
    \footnotesize 
    \renewcommand{\arraystretch}{1.1}
    \resizebox{\linewidth}{!}{
    \setlength{\tabcolsep}{1.2mm}{
    \begin{tabular}{c!{\vrule width 0.1pt}
    cc!{\vrule width 0.1pt}
    cc!{\vrule width 0.1pt}
    cc!{\vrule width 0.1pt}
    cc!{\vrule width 0.1pt}
    cc}
    \hline\thickhline
    \rowcolor{gray!10}
    
    & \multicolumn{2}{c!{\vrule width 0.1pt}}{\textbf{CiteSeer}}  
    & \multicolumn{2}{c!{\vrule width 0.1pt}}{\textbf{PubMed}}  
    & \multicolumn{2}{c!{\vrule width 0.1pt}}{\textbf{Amazon-Computer}}  
    & \multicolumn{2}{c!{\vrule width 0.1pt}}{\textbf{Amazon-Photo}}  
    & \multicolumn{2}{c}{\textbf{Questions}}  \\
    
    \cline{2-11}
    \rowcolor{gray!10}
    \multirow{-2}{*}{\diagbox[width=8.5em,height=2.4em]{\textbf{Methods}}{\textbf{Datasets}}}  
    & ACC & NMI 
    & ACC & NMI 
    & ACC & NMI 
    & ACC & NMI 
    & ACC & NMI \\
    
    \hline
    FedSage+$^*$ & $16.94_{\pm 2.06}$ & $5.53_{\pm 0.28}$ & $40.57_{\pm 7.89}$ & $2.18_{\pm 4.97}$ & \textcolor{orange}{$\mathbf{22.61_{\pm 0.82}}$} & \textcolor{orange}{$\mathbf{8.19_{\pm 0.82}}$} & \textcolor{orange}{$\mathbf{34.35_{\pm 1.30}}$} & $11.22_{\pm 1.28}$ & $77.29_{\pm 1.48}$ & \textcolor{orange}{$\mathbf{0.83_{\pm 1.56}}$} \\
    \rowcolor{gray!10}
    FedPUB$^*$ & $8.86_{\pm 5.23}$ & $0.00_{\pm 0.01}$ & $31.38_{\pm 7.72}$ & $0.00_{\pm 0.01}$ & $22.60_{\pm 10.20}$ & $0.00_{\pm 0.01}$ & $8.24_{\pm 3.44}$ & $0.00_{\pm 0.01}$ & $61.13_{\pm 44.15}$ & $0.00_{\pm 0.01}$ \\
    FedTAD$^*$ & $18.82_{\pm 1.38}$ & $4.03_{\pm 2.76}$ & $36.85_{\pm 4.43}$ & $0.38_{\pm 0.51}$ & $13.30_{\pm 11.94}$ & $3.82_{\pm 2.29}$ & $13.41_{\pm 3.89}$ & $5.59_{\pm 1.84}$ & $67.10_{\pm 32.03}$ & $0.10_{\pm 0.06}$ \\
    \rowcolor{gray!10}
    FedGTA$^*$ & \textcolor{orange}{$\mathbf{27.16_{\pm 0.66}}$} & \textcolor{orange}{$\mathbf{8.42_{\pm 1.66}}$} & \textcolor{orange}{$\mathbf{47.53_{\pm 1.77}}$} & $2.15_{\pm 0.81}$ & $20.75_{\pm 1.52}$ & $6.19_{\pm 1.10}$ & $33.46_{\pm 1.24}$ & \textcolor{orange}{$\mathbf{12.15_{\pm 0.77}}$} & \textcolor{orange}{$\mathbf{88.31_{\pm 3.08}}$} & $0.38_{\pm 0.07}$ \\
    FedIIH$^*$ & $18.08_{\pm 6.58}$ & $0.67_{\pm 0.72}$ & $33.70_{\pm 7.42}$ & $0.02_{\pm 0.03}$ & $7.44_{\pm 4.78}$ & $0.09_{\pm 0.11}$ & $11.52_{\pm 6.71}$ & $0.00_{\pm 0.01}$ & $79.38_{\pm 3.66}$ & $0.00_{\pm 0.01}$ \\
    \rowcolor{gray!10}
    FGSSL$^*$ & $10.29_{\pm 6.35}$ & $2.18_{\pm 1.45}$ & $36.96_{\pm 4.57}$ & $6.87_{\pm 3.61}$ & $15.11_{\pm 2.32}$ & $2.62_{\pm 2.01}$ & $20.30_{\pm 7.15}$ & $3.17_{\pm 1.36}$ & $84.48_{\pm 6.25}$ & $0.22_{\pm 0.20}$ \\
    FGGP$^*$ & $10.46_{\pm 3.75}$ & $2.06_{\pm 0.44}$ & $44.74_{\pm 2.38}$ & $6.91_{\pm 4.18}$ & $19.85_{\pm 3.68}$ & $5.42_{\pm 0.83}$ & $16.25_{\pm 1.14}$ & $3.78_{\pm 0.95}$ & $76.87_{\pm 3.33}$ & $0.31_{\pm 1.36}$ \\
    \rowcolor{gray!10}
    FedSPA$^*$ & $19.55_{\pm 5.13}$ & $5.49_{\pm 2.70}$ & $39.34_{\pm 4.91}$ & \textcolor{orange}{$\mathbf{8.12_{\pm 0.95}}$} & $11.17_{\pm 5.75}$ & $2.49_{\pm 2.37}$ & $18.53_{\pm 1.25}$ & $3.31_{\pm 0.07}$ & $75.75_{\pm 4.21}$ & $0.00_{\pm 0.06}$ \\
    S2FGL$^*$ & $11.26_{\pm 5.81}$ & $2.11_{\pm 2.45}$ & $34.96_{\pm 1.81}$ & $6.56_{\pm 1.51}$ & $9.61_{\pm 3.84}$ & $1.86_{\pm 2.42}$ & $10.25_{\pm 7.83}$ & $1.61_{\pm 0.58}$ & $63.19_{\pm 4.96}$ & $0.00_{\pm 0.08}$ \\
    \hline
    \rowcolor{gray!10}
    FedNCN & \textcolor{royalblue}{$\mathbf{57.83_{\pm 3.24}}$} & \textcolor{royalblue}{$\mathbf{16.91_{\pm 1.58}}$} & \textcolor{royalblue}{$\mathbf{63.72_{\pm 1.86}}$} & \textcolor{royalblue}{$\mathbf{10.15_{\pm 1.92}}$} & \textcolor{royalblue}{$\mathbf{69.94_{\pm 2.37}}$} & \textcolor{royalblue}{$\mathbf{20.68_{\pm 3.45}}$} & \textcolor{royalblue}{$\mathbf{72.15_{\pm 2.06}}$} & \textcolor{royalblue}{$\mathbf{28.43_{\pm 2.51}}$} & \textcolor{royalblue}{$\mathbf{90.58_{\pm 5.12}}$} & \textcolor{royalblue}{$\mathbf{1.52_{\pm 0.95}}$} \\
    AdaFGC (Ours) & \textcolor{darkred}{$\mathbf{65.09_{\pm 2.41}}$} & \textcolor{darkred}{$\mathbf{24.34_{\pm 1.19}}$} & \textcolor{darkred}{$\mathbf{70.02_{\pm 1.38}}$} & \textcolor{darkred}{$\mathbf{13.87_{\pm 1.69}}$} & \textcolor{darkred}{$\mathbf{79.70_{\pm 1.69}}$} & \textcolor{darkred}{$\mathbf{26.35_{\pm 2.84}}$} & \textcolor{darkred}{$\mathbf{77.11_{\pm 1.99}}$} & \textcolor{darkred}{$\mathbf{32.27_{\pm 2.25}}$} & \textcolor{darkred}{$\mathbf{94.31_{\pm 5.19}}$} & \textcolor{darkred}{$\mathbf{4.12_{\pm 0.86}}$} \\
    \hline\thickhline
    \rowcolor{gray!10}
    
    & \multicolumn{2}{c!{\vrule width 0.1pt}}{\textbf{CiteSeer}}  
    & \multicolumn{2}{c!{\vrule width 0.1pt}}{\textbf{PubMed}}  
    & \multicolumn{2}{c!{\vrule width 0.1pt}}{\textbf{Amazon-Computer}}  
    & \multicolumn{2}{c!{\vrule width 0.1pt}}{\textbf{Amazon-Photo}}  
    & \multicolumn{2}{c}{\textbf{Questions}}  \\
    
    \cline{2-11}
    \rowcolor{gray!10}
    \multirow{-2}{*}{\diagbox[width=8.5em,height=2.4em]{\textbf{Methods}}{\textbf{Datasets}}}  
    & ARI & F1 
    & ARI & F1 
    & ARI & F1 
    & ARI & F1 
    & ARI & F1 \\
    
    \hline
    FedSage+$^*$ & $0.00_{\pm 0.56}$ & $11.28_{\pm 0.60}$ & $0.03_{\pm 0.01}$ & $26.44_{\pm 5.49}$ & \textcolor{orange}{$\mathbf{6.32_{\pm 1.17}}$} & \textcolor{orange}{$\mathbf{17.44_{\pm 0.18}}$} & $9.15_{\pm 2.14}$ & \textcolor{orange}{$\mathbf{18.82_{\pm 0.59}}$} & $0.83_{\pm 2.00}$ & $39.23_{\pm 1.32}$ \\
    \rowcolor{gray!10}
    FedPUB$^*$ & $0.00_{\pm 0.01}$ & $2.97_{\pm 1.66}$ & $0.00_{\pm 0.01}$ & $13.73_{\pm 2.99}$ & $0.00_{\pm 0.01}$ & $4.87_{\pm 3.36}$ & $0.00_{\pm 0.01}$ & $1.85_{\pm 0.43}$ & $0.00_{\pm 0.01}$ & $31.24_{\pm 22.04}$ \\
    FedTAD$^*$ & $1.56_{\pm 1.21}$ & $9.84_{\pm 2.69}$ & $0.53_{\pm 0.74}$ & $19.86_{\pm 3.31}$ & $4.95_{\pm 2.50}$ & $3.55_{\pm 2.07}$ & $5.56_{\pm 2.61}$ & $4.14_{\pm 1.70}$ & $0.00_{\pm 0.37}$ & $39.89_{\pm 14.43}$ \\
    \rowcolor{gray!10}
    FedGTA$^*$ & \textcolor{orange}{$\mathbf{4.90_{\pm 2.24}}$} & \textcolor{orange}{$\mathbf{20.21_{\pm 0.63}}$} & $1.80_{\pm 1.01}$ & \textcolor{orange}{$\mathbf{34.20_{\pm 1.73}}$} & $3.78_{\pm 0.81}$ & $7.04_{\pm 0.53}$ & \textcolor{orange}{$\mathbf{9.80_{\pm 1.11}}$} & $10.97_{\pm 0.53}$ & \textcolor{orange}{$\mathbf{1.33_{\pm 0.42}}$} & \textcolor{orange}{$\mathbf{49.12_{\pm 0.46}}$} \\
    FedIIH$^*$ & $0.41_{\pm 0.54}$ & $5.00_{\pm 1.12}$ & $0.02_{\pm 0.03}$ & $14.71_{\pm 2.89}$ & $0.06_{\pm 0.09}$ & $1.79_{\pm 1.04}$ & $0.00_{\pm 0.01}$ & $2.65_{\pm 1.90}$ & $0.00_{\pm 0.01}$ & $30.36_{\pm 22.37}$ \\
    \rowcolor{gray!10}
    FGSSL$^*$ & $1.74_{\pm 0.82}$ & $7.77_{\pm 2.29}$ & $7.62_{\pm 0.34}$ & $27.18_{\pm 7.14}$ & $2.97_{\pm 0.01}$ & $10.11_{\pm 1.43}$ & $3.49_{\pm 0.04}$ & $14.26_{\pm 0.51}$ & $0.14_{\pm 0.27}$ & $33.67_{\pm 7.03}$ \\
    FGGP$^*$ & $2.29_{\pm 1.63}$ & $7.62_{\pm 1.09}$ & $5.10_{\pm 0.76}$ & $33.05_{\pm 5.68}$ & $2.38_{\pm 1.84}$ & $15.27_{\pm 3.35}$ & $3.35_{\pm 0.49}$ & $11.47_{\pm 0.50}$ & $1.25_{\pm 0.10}$ & $42.45_{\pm 9.94}$ \\
    \rowcolor{gray!10}
    FedSPA$^*$ & $3.99_{\pm 1.09}$ & $13.71_{\pm 1.28}$ & \textcolor{orange}{$\mathbf{9.09_{\pm 0.42}}$} & $27.61_{\pm 2.22}$ & $1.74_{\pm 1.21}$ & $7.96_{\pm 0.65}$ & $3.53_{\pm 2.88}$ & $11.68_{\pm 1.94}$ & $0.25_{\pm 0.61}$ & $33.59_{\pm 17.84}$ \\
    S2FGL$^*$ & $2.32_{\pm 0.82}$ & $8.21_{\pm 1.40}$ & $6.46_{\pm 0.33}$ & $21.77_{\pm 4.59}$ & $1.81_{\pm 1.81}$ & $7.00_{\pm 1.78}$ & $1.68_{\pm 0.44}$ & $6.39_{\pm 1.76}$ & $0.36_{\pm 0.47}$ & $45.21_{\pm 17.31}$ \\
    \hline
    \rowcolor{gray!10}
    FedNCN & \textcolor{royalblue}{$\mathbf{14.89_{\pm 4.38}}$} & \textcolor{royalblue}{$\mathbf{26.35_{\pm 3.17}}$} & \textcolor{royalblue}{$\mathbf{12.84_{\pm 4.89}}$} & \textcolor{royalblue}{$\mathbf{43.21_{\pm 3.78}}$} & \textcolor{royalblue}{$\mathbf{22.58_{\pm 4.92}}$} & \textcolor{royalblue}{$\mathbf{23.14_{\pm 0.96}}$} & \textcolor{royalblue}{$\mathbf{29.87_{\pm 3.76}}$} & \textcolor{royalblue}{$\mathbf{26.73_{\pm 1.34}}$} & \textcolor{royalblue}{$\mathbf{4.31_{\pm 2.58}}$} & \textcolor{royalblue}{$\mathbf{50.28_{\pm 2.43}}$} \\
    AdaFGC (Ours) & \textcolor{darkred}{$\mathbf{19.27_{\pm 3.64}}$} & \textcolor{darkred}{$\mathbf{37.85_{\pm 2.65}}$} & \textcolor{darkred}{$\mathbf{16.43_{\pm 4.41}}$} & \textcolor{darkred}{$\mathbf{49.50_{\pm 3.34}}$} & \textcolor{darkred}{$\mathbf{27.44_{\pm 4.15}}$} & \textcolor{darkred}{$\mathbf{30.86_{\pm 0.74}}$} & \textcolor{darkred}{$\mathbf{33.72_{\pm 3.63}}$} & \textcolor{darkred}{$\mathbf{30.24_{\pm 1.13}}$} & \textcolor{darkred}{$\mathbf{6.87_{\pm 2.47}}$} & \textcolor{darkred}{$\mathbf{54.05_{\pm 2.27}}$} \\
    \hline\thickhline
    \end{tabular}
    }}
    \vspace{+0.2cm}
    \end{table*}

\subsection{Main Results}

To answer \textbf{Q1}, we report the clustering performances on five graph datasets in Table~\ref{tab: main}. We further evaluate these methods with varying numbers of participating clients in Appendix~\ref{appendix: more experiments}.

\vspace{+0.1cm}
\noindent \textbf{Comparison with Supervised FGL Methods.}
AdaFGC consistently achieves the best performance across all datasets and evaluation metrics, demonstrating stable superiority over existing supervised FGL methods. For instance, on CiteSeer, AdaFGC surpasses the strongest supervised baseline FedGTA by +37.93\% in ACC and +15.92\% in NMI. In contrast, many supervised FGL methods suffer severe performance degradation when adapted to clustering with pseudo labels, with several approaches (e.g., FedPUB and FedIIH) even producing near-zero NMI scores, indicating their failure to capture meaningful cluster structures.
    \begin{figure}
    \centering
    \includegraphics[width=0.48\textwidth]{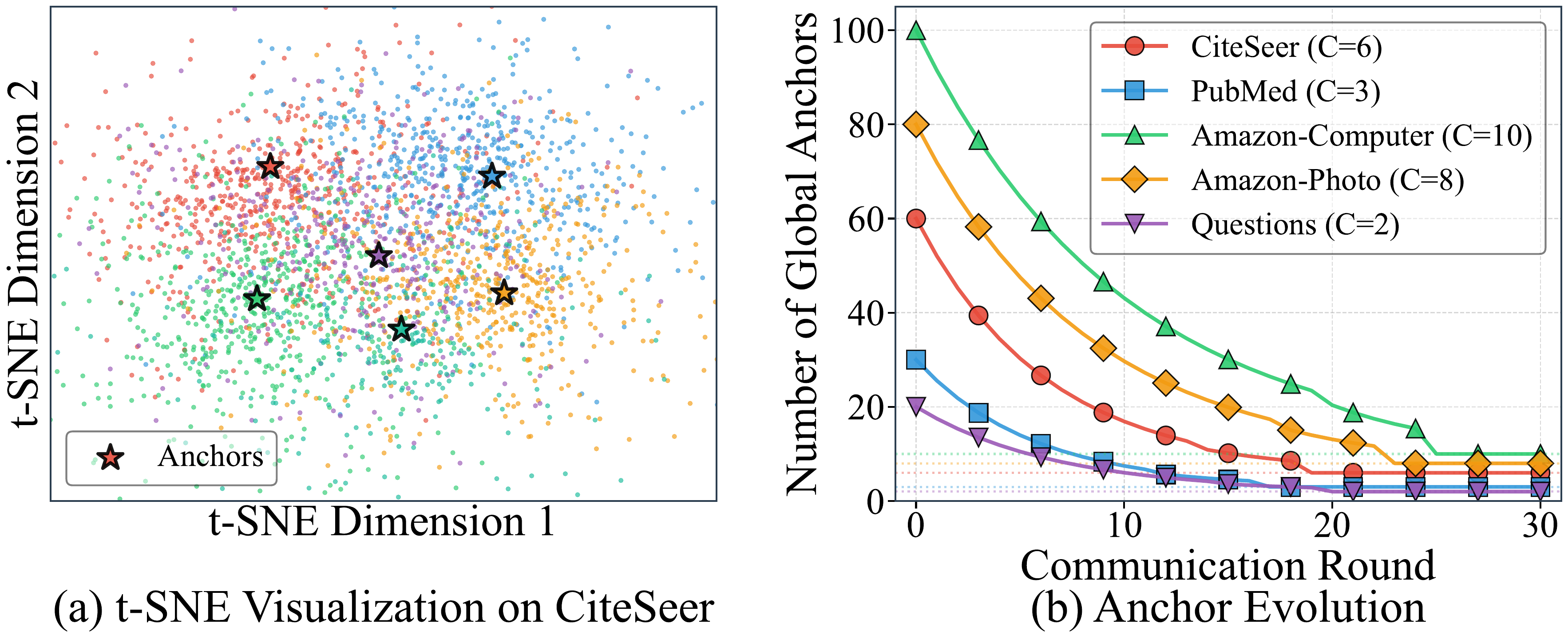}
    \caption{\textbf{Interpretability investigation.} (a) t-SNE visualization of node embeddings from all clients with global anchors. (b) Evolution of global anchor count during training, starting from $\hat{C}=10C$ and converging to the true number of communities $C$ for each dataset.}
    \label{fig: interpretability}
    \end{figure}

\vspace{+0.1cm}
\noindent \textbf{Comparison with Unsupervised FGL Methods.}
AdaFGC also consistently outperforms the state-of-the-art unsupervised FGL method FedNCN across all datasets and metrics. For instance, on CiteSeer, AdaFGC improves ACC, NMI, ARI, and F1 by +7.26\%, +7.43\%, +4.38\%, and +11.50\%, respectively. This demonstrates that the proposed global community-aware contrastive learning effectively captures cross-client community structures, while the adaptive anchor evolution mechanism better addresses the heterogeneity among federated subgraphs.

\subsection{Interpretability Investigation}

To answer \textbf{Q2}, we investigate how AdaFGC reveals meaningful community structures across federated clients through embedding visualization and global community anchor evolution tracking.

\vspace{+0.1cm}
\noindent \textbf{Embedding Visualization.}
Fig.~\ref{fig: interpretability} (a) presents a t-SNE visualization of node embeddings from all 10 clients on the CiteSeer dataset after training convergence. Each point represents a node, colored by its ground-truth community label, while star markers indicate the learned global community anchors. The visualization demonstrates that: (1) nodes from the same community exhibit discernible clustering tendencies despite originating from heterogeneous clients, validating that AdaFGC captures meaningful global community structures to address \textbf{L2} via the global community-aware contrastive learning mechanism; (2) the global anchors are consistently positioned within or near the high-density regions of their respective communities, confirming their effectiveness as local clustering guidance even under non-trivial data heterogeneity.

\vspace{+0.1cm}
\noindent \textbf{Anchor Evolution Tracking.}
Fig.~\ref{fig: interpretability} (b) tracks the number of global anchors throughout training across all five datasets. We initialize the anchor count as $\hat{C} = 10C$, where $C$ is the true number of communities for each dataset. For instance, CiteSeer with $C=6$ starts with 60 anchors, while Questions with $C=2$ starts with 20 anchors. The anchor count rapidly decreases in early rounds as redundant anchors merge based on multi-client clustering signals. By round 40-60, the anchor count stabilizes near the true number of communities for each dataset, demonstrating that AdaFGC effectively resolves \textbf{L1} without requiring pre-defined cluster cardinality. The convergence patterns are consistent across datasets with varying characteristics (ranging from $C=2$ to $C=10$), highlighting the effectiveness of the adaptive anchor evolution mechanism. In addition, the smooth stabilization process indicates that anchor refinement is progressive, verifying the robustness of the adaptive evolution strategy.

\subsection{Ablation Study}

\begin{table}[htbp]
    \setlength{\abovecaptionskip}{0.2cm}
    \setlength{\belowcaptionskip}{-0.2cm}
    \centering
    \caption{\textbf{Ablation study} on three datasets. We evaluate the contribution of each loss in AdaFGC.}
    \label{tab: ablation}
    \footnotesize 
    \renewcommand{\arraystretch}{1.1}
    \resizebox{\linewidth}{!}{
    \setlength{\tabcolsep}{2.5mm}{
    \begin{tabular}{c!{\vrule width 0.1pt}
    cc!{\vrule width 0.1pt}
    cc!{\vrule width 0.1pt}
    cc}
    \hline\thickhline
    \rowcolor{gray!10}
    
    & \multicolumn{2}{c!{\vrule width 0.1pt}}{\textbf{Amazon-Computer}}  
    & \multicolumn{2}{c!{\vrule width 0.1pt}}{\textbf{Amazon-Photo}}  
    & \multicolumn{2}{c}{\textbf{Questions}}  \\
    
    \cline{2-7}
    \rowcolor{gray!10}
    \multirow{-2}{*}{\diagbox[width=10em,height=2.4em]{\textbf{Variants}}{\textbf{Datasets}}}  
    & NMI & ARI 
    & NMI & ARI 
    & NMI & ARI \\
    
    \hline
    \rowcolor{gray!10}
    w/o. $\mathcal{L}_\mathrm{anc}$ & \textcolor{orange}{$\mathbf{24.27_{\pm 2.67}}$} & \textcolor{orange}{$\mathbf{25.21_{\pm 3.98}}$} & \textcolor{orange}{$\mathbf{30.05_{\pm 2.18}}$} & \textcolor{orange}{$\mathbf{31.48_{\pm 3.51}}$} & \textcolor{orange}{$\mathbf{3.51_{\pm 0.79}}$} & \textcolor{orange}{$\mathbf{5.24_{\pm 2.35}}$} \\
    w/o. $\mathcal{L}_\mathrm{node}$ & $23.47_{\pm 2.71}$ & $24.72_{\pm 4.03}$ & $27.38_{\pm 2.21}$ & $29.79_{\pm 3.47}$ & $3.28_{\pm 0.81}$ & $4.97_{\pm 2.39}$ \\
    \rowcolor{gray!10}
    w/o. $\mathcal{L}_\mathrm{topo}$ & \textcolor{royalblue}{$\mathbf{24.68_{\pm 2.79}}$} & \textcolor{royalblue}{$\mathbf{25.82_{\pm 4.08}}$} & \textcolor{royalblue}{$\mathbf{31.61_{\pm 2.19}}$} & \textcolor{royalblue}{$\mathbf{32.15_{\pm 3.56}}$} & \textcolor{royalblue}{$\mathbf{3.74_{\pm 0.84}}$} & \textcolor{royalblue}{$\mathbf{5.51_{\pm 2.41}}$} \\
    \hline
    \rowcolor{gray!10}
    AdaFGC (Full) & \textcolor{darkred}{$\mathbf{26.35_{\pm 2.84}}$} & \textcolor{darkred}{$\mathbf{27.44_{\pm 4.15}}$} & \textcolor{darkred}{$\mathbf{32.27_{\pm 2.25}}$} & \textcolor{darkred}{$\mathbf{33.72_{\pm 3.63}}$} & \textcolor{darkred}{$\mathbf{4.12_{\pm 0.86}}$} & \textcolor{darkred}{$\mathbf{6.87_{\pm 2.47}}$} \\
    \hline\thickhline
    \end{tabular}
    }}
\end{table}

To address \textbf{Q3}, we evaluate the contribution of each loss component by removing: (1) $\mathcal{L}_\mathrm{anc}$ (anchor-based community-level contrastive loss), (2) $\mathcal{L}_\mathrm{node}$ (node-level contrastive loss), and (3) $\mathcal{L}_\mathrm{topo}$ (topology-level contrastive loss). Other core components (e.g., global community-aware clustering and anchor evolution) are retained as they are essential for model functionality.

As shown in Table~\ref{tab: ablation}, removing any loss degrades performance. Excluding $\mathcal{L}_\mathrm{node}$ causes the largest drop (e.g., NMI/ARI decrease by 2.88/2.72 on Amazon-Computer), confirming its critical role in node representation learning. Removing $\mathcal{L}_\mathrm{anc}$ yields the second-largest decline, highlighting the importance of global community guidance. These results validate that all three losses jointly contribute to AdaFGC's effectiveness.

\begin{figure*}[htb]
 \centering
  \includegraphics[width=0.90\textwidth]{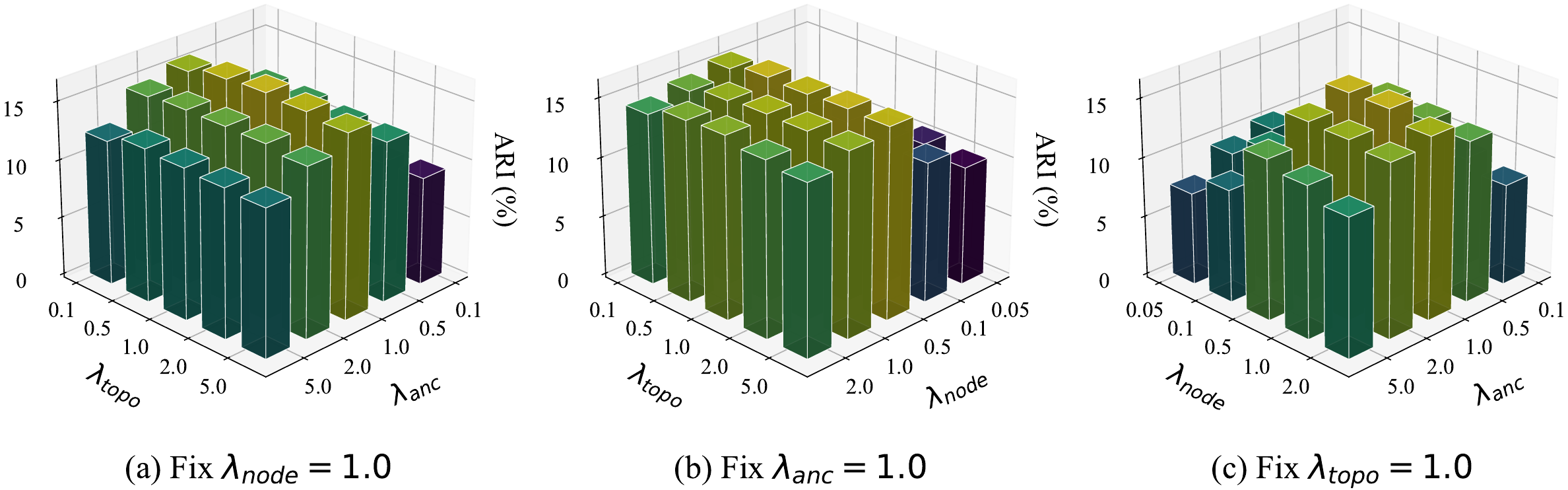}
\caption{\textbf{Hyperparameter analysis for $\lambda_\mathrm{anc}$, $\lambda_\mathrm{node}$, $\lambda_\mathrm{topo}$} on the PubMed dataset.}
\label{fig: robust}
\end{figure*}

\subsection{Hyperparameter Sensitivity}

To address \textbf{Q4}, we investigate the sensitivity of AdaFGC to the loss weight hyperparameters $\lambda_\mathrm{anc}$, $\lambda_\mathrm{node}$, and $\lambda_\mathrm{topo}$ on the PubMed dataset, as shown in Fig.~\ref{fig: robust}. We observe that AdaFGC is primarily sensitive to $\mathcal{L}_\mathrm{node}$ (controlled by $\lambda_\mathrm{node}$) and $\mathcal{L}_\mathrm{anc}$ (controlled by $\lambda_\mathrm{anc}$). When their weights are too low, AdaFGC exhibits relatively poor performance, as insufficient node-level representation learning and weak global community guidance fail to capture meaningful clustering structures. However, when adjusted within a reasonable range, AdaFGC maintains overall stable performance across different hyperparameter configurations, demonstrating its robustness to hyperparameter choices in practical federated scenarios.

\subsection{Efficiency Analysis}
\label{sec: efficiency}

    \begin{figure}
    \centering
    \includegraphics[width=0.30\textwidth]{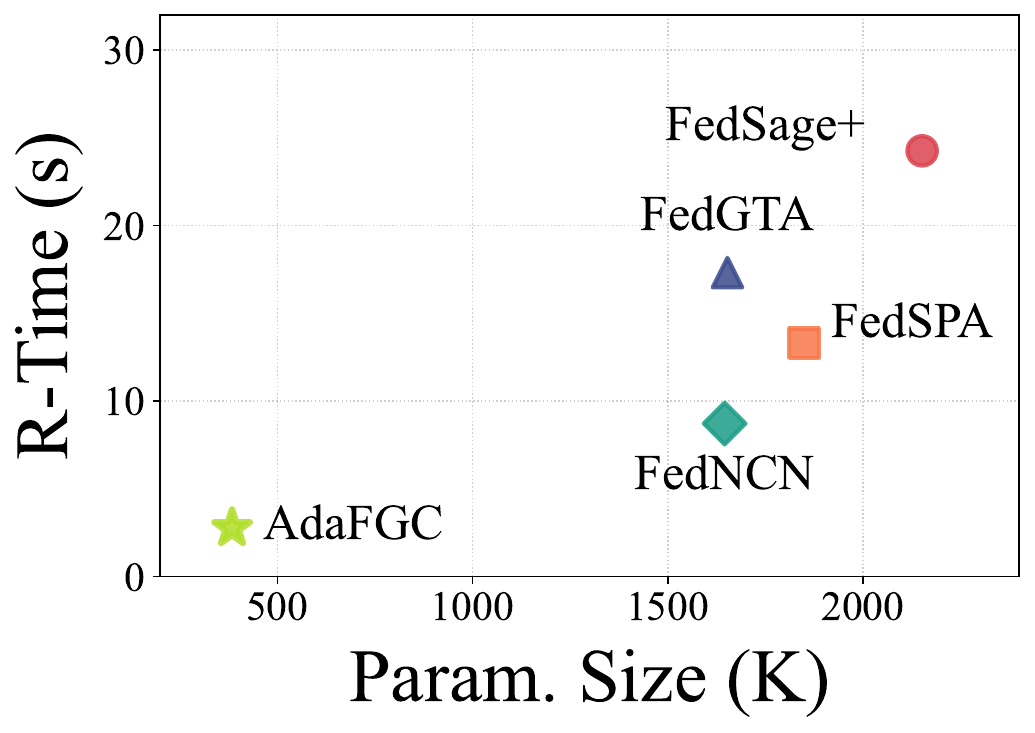}
    \caption{\textbf{Efficiency analysis} on Amazon-Computers dataset.}
    \label{fig: efficiency}
    \end{figure}

\begin{figure*}[htb]
 \centering
  \includegraphics[width=0.998\textwidth]{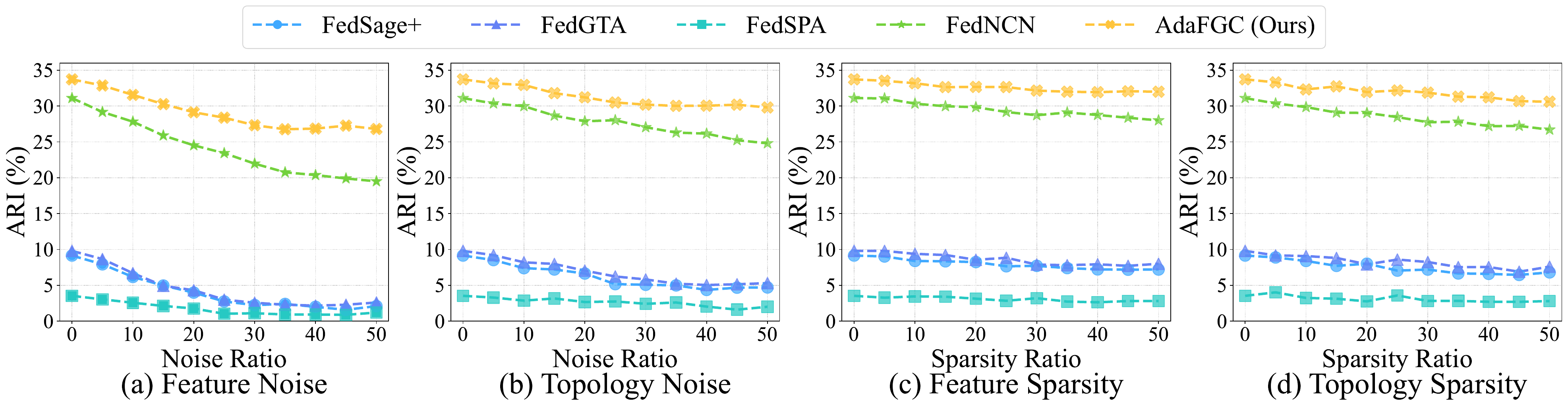}
\caption{\textbf{Robustness analysis} on the Amazon-Photo dataset. We investigate four types of perturbations: (a) \textbf{feature noise} (i.e., adding Gaussian noise to node features), (b) \textbf{topology noise} (i.e., adding random edges to the graph), (c) \textbf{feature sparsity} (i.e., masking node features), and (d) \textbf{topology sparsity} (i.e., removing edges from the graph).}
\label{fig: robust}
\vspace{+0.3cm}
\end{figure*}

To answer \textbf{Q5}, we evaluate the efficiency of AdaFGC in terms of parameter size and computational cost. As shown in Table~\ref{fig: efficiency}, AdaFGC consistently outperforms all baselines on the Amazon-Computer dataset. Specifically, AdaFGC requires only 384.00 K parameters, representing a reduction of approximately 76.7\% compared to the most competitive baseline FedNCN (1645.60 K) and 82.2\% compared to FedSage+. In terms of computational speed, AdaFGC achieves a running time of 2.77 s per round, which is over 3.1× faster than FedNCN (8.72 s) and nearly 8.7× faster than FedSage+ (24.26 s). These results demonstrate that AdaFGC not only achieves superior clustering performance but also significantly reduces the communication and computation burden, making it highly suitable for resource-constrained federated environments.

\begin{figure}
    \centering
    \includegraphics[width=0.38\textwidth]{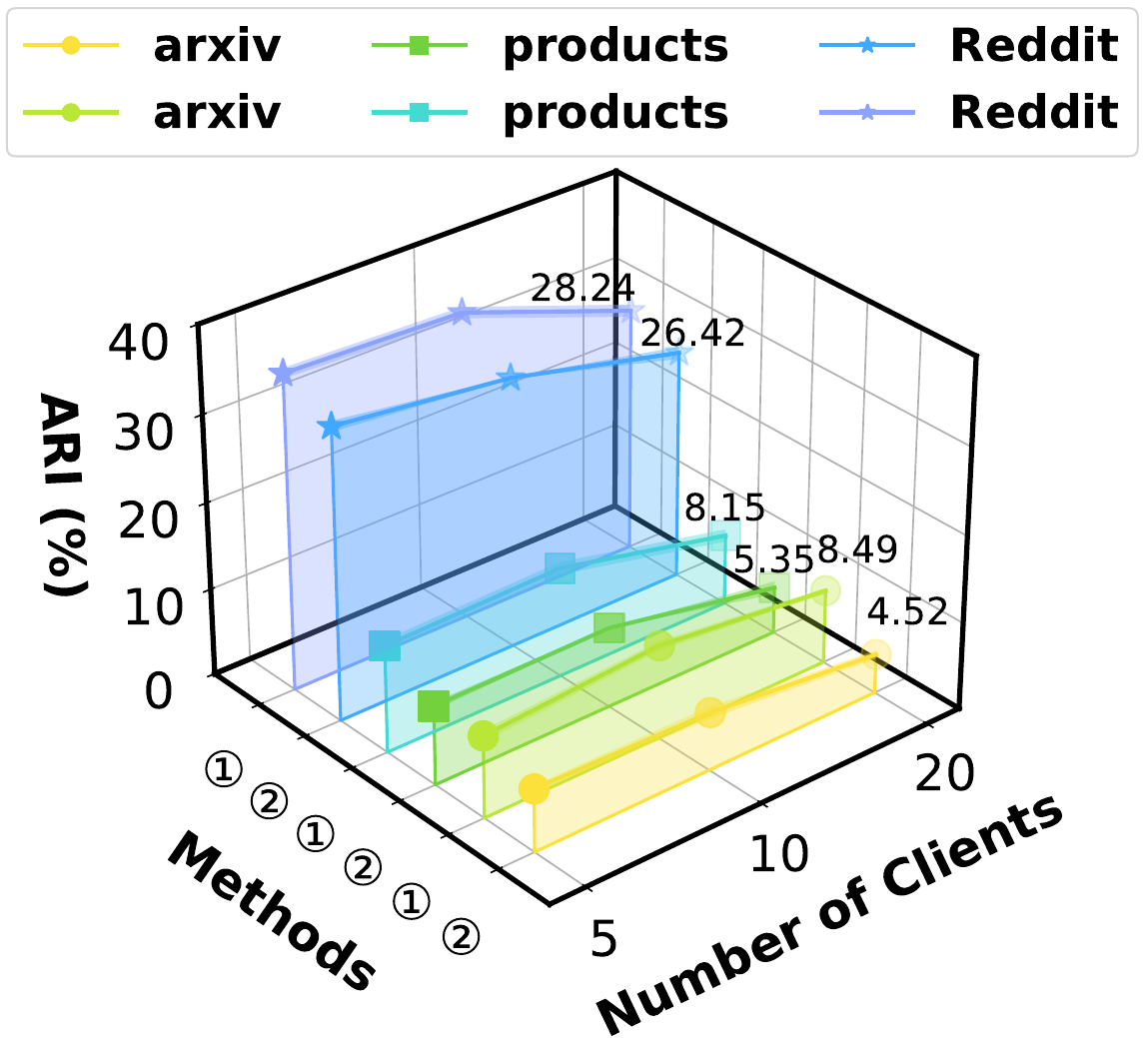}
    \caption{\textbf{Clustering results on large-scale graphs}, where `\ding{172}' denotes FedNCN and `\ding{173}' denotes AdaFGC (Ours).}
    \label{fig: large_scale}
\end{figure}

Moreover, we further evaluate the scalability of AdaFGC on three large-scale datasets: ogb-arxiv (169K nodes), ogb-products (2.45M nodes), and Reddit (233K nodes). We compare AdaFGC with FedNCN, the strongest baseline from the main experiments, under different numbers of clients ($K \in \{5,10,20\}$). Each client receives a subgraph generated by Louvain-based partitioning to simulate community-aware distributed data. As shown in Fig.~\ref{fig: large_scale}, AdaFGC consistently outperforms FedNCN across all datasets and client settings in terms of ARI. Although clustering performance generally decreases as the number of clients increases due to stronger data fragmentation, AdaFGC maintains a clear advantage, especially on larger graphs such as ogb-products. These results demonstrate that AdaFGC can effectively scale to large graphs while preserving superior clustering quality in federated environments.

\subsection{Robustness Analysis}

To answer \textbf{Q6}, we evaluate the robustness of AdaFGC against four types of perturbations (i.e., feature noise, topology noise, feature sparsity, and topology sparsity) as illustrated in Fig. \ref{fig: robust}. The results clearly show that supervised FGL methods (FedSage+, FedGTA, and FedSPA) consistently struggle, with ARI scores remaining below 10\% across all scenarios; this confirms that relying on pseudo-labels in federated settings fails to capture stable community structures, especially under severe data corruption. While the unsupervised baseline FedNCN proves to be a stronger competitor, it still lags significantly behind AdaFGC. Notably, AdaFGC at 50\% feature sparsity outperforms even the unperturbed (0\%) FedNCN. This resilience is mainly driven by our global community-aware contrastive learning, which effectively compensates for local sparsity through global semantic consistency. Overall, AdaFGC exhibits strong robustness to imperfect real-world graph data.

\section{Theoretical Proofs}
\label{appendix: proofs}

\begin{theorem}[Graph Filtering as Low-pass Smoothing]
\label{thm: smooth fused rep}
For client $k$ with feature-view embeddings $\mathbf{Z}^k$ and symmetrically normalized adjacency matrix $\tilde{A}$, the topology-view embeddings $\mathbf{H}^k$ obtained via graph filtering can be expressed as:
\begin{equation}
    \mathbf{H}^k = \frac{1}{\alpha+1}\sum_{l=0}^L\Big(\frac{\alpha}{\alpha+1} \tilde{A}\Big)^l \mathbf{Z}^k,
\end{equation}
where $\alpha>0$ controls smoothing strength and $L$ is the propagation depth.
\end{theorem}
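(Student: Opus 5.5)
The statement is really a derivation claim: the filter $\mathbf{H}^k$ arises from a graph-signal-denoising (low-pass smoothing) objective, and truncating its closed-form solution gives exactly the displayed finite sum. I would make this precise by introducing the smoothing problem
\[
\min_{\mathbf{H}} \; \|\mathbf{H}-\mathbf{Z}^k\|_F^2 + \alpha\,\mathrm{tr}\big(\mathbf{H}^\top (\mathbf{I}-\tilde{A})\mathbf{H}\big),
\]
where the first term keeps $\mathbf{H}$ faithful to the feature view and the second is the Dirichlet energy with respect to the normalized Laplacian $\mathbf{I}-\tilde{A}$. This term penalizes high-frequency components, which is what makes the result a low-pass filter.

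The key steps, in order, are as follows.

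First, since $\mathbf{I}-\tilde{A}$ is positive semidefinite, the objective is strictly convex. Setting the gradient to zero gives $(\mathbf{I}+\alpha(\mathbf{I}-\tilde{A}))\mathbf{H}=\mathbf{Z}^k$, that is, $\big((1+\alpha)\mathbf{I}-\alpha\tilde{A}\big)\mathbf{H}=\mathbf{Z}^k$.

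Second, factor out $(1+\alpha)$ to obtain
\[
\mathbf{H}^{*}=\tfrac{1}{\alpha+1}\big(\mathbf{I}-\tfrac{\alpha}{\alpha+1}\tilde{A}\big)^{-1}\mathbf{Z}^k .
\]

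Third, expand the inverse as a Neumann series, $\big(\mathbf{I}-\gamma\tilde{A}\big)^{-1}=\sum_{l\ge 0}\gamma^l\tilde{A}^l$ with $\gamma=\alpha/(\alpha+1)$.

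Fourth, truncate the series at depth $L$ to obtain the stated $\mathbf{H}^k$. This also exhibits it as the $L$-step iterate of the fixed-point recursion $\mathbf{H}^{(l+1)}=\gamma\tilde{A}\mathbf{H}^{(l)}+\tfrac{1}{\alpha+1}\mathbf{Z}^k$ started from $\mathbf{H}^{(0)}=\tfrac{1}{\alpha+1}\mathbf{Z}^k$, which can be checked by induction.

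The main technical point is the convergence of the Neumann series, and hence the validity of the truncation. Because $\tilde{A}$ is symmetrically normalized, its spectrum lies in $[-1,1]$, so $\|\gamma\tilde{A}\|_2\le\gamma<1$ for every $\alpha>0$. The series therefore converges, and the truncation error is bounded by
\[
\tfrac{1}{\alpha+1}\sum_{l>L}\gamma^{l}\|\mathbf{Z}^k\| \;=\; \gamma^{L+1}\|\mathbf{Z}^k\|,
\]
which decays geometrically in $L$. The low-pass interpretation then follows from the spectral response: for an eigenvalue $\lambda$ of $\tilde{A}$, the filter acts as $\tfrac{1}{\alpha+1}\sum_{l=0}^{L}(\gamma\lambda)^l$, which is largest near $\lambda=1$ (low Laplacian frequency) and is damped for small or negative $\lambda$.

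Finally, to support the remark in the text that GCN and SGC are special cases, I would show that keeping only the top-order term, or taking $\alpha\to\infty$ with rescaling, recovers $\tilde{A}^L\mathbf{Z}^k$. This is the SGC filter, and a one-layer linear GCN is the case $L=1$. I would flag this correspondence as a limiting or rescaled identification rather than an exact equality.
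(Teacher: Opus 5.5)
Your proposal is correct and follows the paper's proof step for step: the same fidelity-plus-Laplacian-regularizer objective, the same stationarity condition and factorization, the same Neumann expansion, and the same truncation at depth $L$. Your convergence condition is actually tighter than the paper's, since you use that the whole spectrum of $\tilde{A}$ lies in $[-1,1]$, whereas the paper only cites $\lambda_{\max}(\tilde{A})\le 1$. You also make explicit that the displayed $\mathbf{H}^k$ is a truncation, with error $\gamma^{L+1}\|\mathbf{Z}^k\|$.

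The only slip is in your closing remark on GCN/SGC. Sending $\alpha\to\infty$ and rescaling by $\alpha+1$ gives $\sum_{l=0}^{L}\tilde{A}^l\mathbf{Z}^k$, not $\tilde{A}^L\mathbf{Z}^k$, because every coefficient $\gamma^l$ tends to $1$. So the SGC filter arises only by discarding the lower-order terms, not from any choice or limit of $\alpha$.
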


\begin{proof} 
The topology-view embeddings that maintain smoothness over the graph structure are obtained by minimizing:
\begin{equation}
    \label{proof_eq_smooth}
    {\mathbf{H}^k} = \argmin_{\mathbf{H}^k} f(\mathbf{H}^k)=\argmin_{\mathbf{H}^k} \|\mathbf{H}^k - \mathbf{Z}^k\|_F^2 + \alpha \cdot \text{tr}\big({\mathbf{H}^k}^\top(\mathbf{I}-\tilde{A})\mathbf{H}^k\big),
\end{equation}
where $\alpha > 0$ controls the smoothness strength, and the Laplacian regularizer $\mathbf{I}-\tilde{A}$ encourages neighboring nodes to have similar embeddings. The fidelity term $\|\mathbf{H}^k - \mathbf{Z}^k\|_F^2$ ensures that the learned embeddings remain close to the initial feature-view embeddings.

Setting $\frac{\partial f(\mathbf{H}^k)}{\partial \mathbf{H}^k}=0$ yields:
\begin{equation}
    {\mathbf{H}^k} = \frac{1}{\alpha+1} \Big(\mathbf{I}-\frac{\alpha}{\alpha+1} \tilde{A}\Big)^{-1}\mathbf{Z}^k.
\end{equation}

Note that $\tilde{A}$ is the symmetrically normalized adjacency matrix, which is symmetric with real eigenvalues bounded by 1. The maximum eigenvalue of a symmetric matrix can be characterized by the Rayleigh quotient:
\[
\lambda_{\max}(\tilde{A}) = \max_{x \neq 0} \frac{x^\top \tilde{A} x}{x^\top x}.
\]

For the symmetrically normalized adjacency matrix $\tilde{A} = D^{-1/2}AD^{-1/2}$, where $D$ is the degree matrix and $A$ is the adjacency matrix, the eigenvalues are bounded:
\[
\lambda_{\max}(\tilde{A}) \le 1.
\]

Thus, the dominant eigenvalue of $\frac{\alpha}{\alpha+1}\tilde{A}$ is guaranteed to be $< 1$ for any $\alpha > 0$, ensuring stability in the graph filtering operation. As shown in~\cite{matrix_analysis}, when the dominant eigenvalue is less than 1, the inverse $\Big(\mathbf{I}-\frac{\alpha}{\alpha+1}\tilde{A}\Big)^{-1}$ can be expressed as a Neumann series:
\begin{equation}
    \Big(\mathbf{I}-\frac{\alpha}{\alpha+1}\tilde{A}\Big)^{-1} = \sum_{l=0}^\infty \Big(\frac{\alpha}{\alpha+1}\tilde{A}\Big)^{l}.
\end{equation}

Truncating the series at $L$ steps gives the power series approximation:
\begin{equation}
    \mathbf{H}^k
    =
    \frac{1}{\alpha+1}
    \sum_{l=0}^{L}
    \Big(\frac{\alpha}{\alpha+1} \tilde{A}\Big)^l
    \mathbf{Z}^k,
\end{equation}
which matches the graph filtering formulation in Eq.~\eqref{eq: forward}. This demonstrates that many widely used GNNs (e.g., GCN, SGC) can be viewed as special cases of this low-pass graph filtering operation.
\end{proof}

\section{Conclusion}
\label{sec: Conclusion}

In this paper, we present AdaFGC, a novel framework that advances federated graph clustering by addressing the limitations of pre-defined cluster cardinality and incomplete inter-community separation. By introducing an over-complete set of global community anchors alongside an adaptive refinement mechanism, AdaFGC effectively bypasses the need for prior knowledge of cluster numbers, allowing the global community structure to emerge dynamically and progressively from distributed subgraphs. Through the integration of global community-aware contrastive learning, our approach leverages shared anchors as prototypes to enforce clearer discriminative boundaries and enhance representation stability across clients via community, node, and topology-level objectives. Extensive empirical evaluations on eight benchmark datasets demonstrate that AdaFGC consistently outperforms existing FGL baselines, offering an effective solution for distilling knowledge from real-world distributed, unlabeled graph repositories.

\newpage
\bibliographystyle{ACM-Reference-Format}
\balance
\bibliography{citation}

@inproceedings{Yang16cora, 
    author = {Yang, Zhilin and Cohen, William W. and Salakhutdinov, Ruslan}, title = {Revisiting Semi-Supervised Learning with Graph Embeddings}, 
    year = {2016}, 
    booktitle={International Conference on Machine Learning, ICML}, 
}

@article{shchur2018amazon_datasets,
  title={Pitfalls of graph neural network evaluation},
  author={Shchur, Oleksandr and Mumme, Maximilian and Bojchevski, Aleksandar and G{\"u}nnemann, Stephan},
  journal={arXiv preprint arXiv:1811.05868},
  year={2018}
}

@article{platonov2023hete_gnn_survey4,
  title={A critical look at the evaluation of GNNs under heterophily: are we really making progress?},
  author={Platonov, Oleg and Kuznedelev, Denis and Diskin, Michael and Babenko, Artem and Prokhorenkova, Liudmila},
  journal = {International Conference on Learning Representations, ICLR},
  year={2023}
}

@article{louvain,
  title={Fast unfolding of communities in large networks},
  author={Blondel, Vincent D and Guillaume, Jean-Loup and Lambiotte, Renaud and Lefebvre, Etienne},
  journal={Journal of Statistical Mechanics: Theory and Experiment},
  volume={2008},
  number={10},
  pages={P10008},
  year={2008},
  publisher={IOP Publishing}
}

@inproceedings{fedavg,
  title={Communication-efficient learning of deep networks from decentralized data},
  author={McMahan, Brendan and Moore, Eider and Ramage, Daniel and Hampson, Seth and y Arcas, Blaise Aguera},
  booktitle={Artificial intelligence and statistics},
  pages={1273--1282},
  year={2017},
  organization={Pmlr}
}

@article{fedprox,
  title={Federated optimization in heterogeneous networks},
  author={Li, Tian and Sahu, Anit Kumar and Zaheer, Manzil and Sanjabi, Maziar and Talwalkar, Ameet and Smith, Virginia},
  journal={Proceedings of Machine learning and systems},
  volume={2},
  pages={429--450},
  year={2020}
}

@article{fedtad,
  title={Fedtad: Topology-aware data-free knowledge distillation for subgraph federated learning},
  author={Zhu, Yinlin and Li, Xunkai and Wu, Zhengyu and Wu, Di and Hu, Miao and Li, Rong-Hua},
  journal={arXiv preprint arXiv:2404.14061},
  year={2024}
}

@inproceedings{fedpub,
  title={Personalized subgraph federated learning},
  author={Baek, Jinheon and Jeong, Wonyong and Jin, Jiongdao and Yoon, Jaehong and Hwang, Sung Ju},
  booktitle={International conference on machine learning},
  pages={1396--1415},
  year={2023},
  organization={PMLR}
}

@article{fedsage_plus,
  title={Subgraph federated learning with missing neighbor generation},
  author={Zhang, Ke and Yang, Carl and Li, Xiaoxiao and Sun, Lichao and Yiu, Siu Ming},
  journal={Advances in neural information processing systems},
  volume={34},
  pages={6671--6682},
  year={2021}
}

@article{fedgta,
  title={FedGTA: Topology-Aware Averaging for Federated Graph Learning},
  author={Li, Xunkai and Wu, Zhengyu and Zhang, Wentao and Zhu, Yinlin and Li, Rong-Hua and Wang, Guoren},
  journal={Proceedings of the VLDB Endowment},
  year={2023},
  publisher={VLDB Endowment}
}

@inproceedings{fedgcn,
  title={Federated graph-level clustering network},
  author={Liu, Jingxin and Cheng, Jieren and Han, Renda and Tu, Wenxuan and Wang, Jiaxin and Peng, Xin},
  booktitle={Proceedings of the AAAI Conference on Artificial Intelligence},
  volume={39},
  number={18},
  pages={18870--18878},
  year={2025}
}

@inproceedings{fedncn,
  title={Federated node-level clustering network with cross-subgraph link mending},
  author={Liu, Jingxin and Han, Renda and Tu, Wenxuan and Wang, Haotian and Wu, Junlong and Cheng, Jieren},
  booktitle={Forty-second International Conference on Machine Learning},
  year={2025}
}

@misc{fgssl,
      title={Federated Graph Semantic and Structural Learning}, 
      author={Wenke Huang and Guancheng Wan and Mang Ye and Bo Du},
      year={2024},
      eprint={2406.18937},
      archivePrefix={arXiv},
      primaryClass={cs.LG},
      url={https://arxiv.org/abs/2406.18937}, 
}

@inproceedings{fggp,
  title={Federated graph learning under domain shift with generalizable prototypes},
  author={Wan, Guancheng and Huang, Wenke and Ye, Mang},
  booktitle={Proceedings of the AAAI conference on artificial intelligence},
  volume={38},
  number={14},
  pages={15429--15437},
  year={2024}
}

@inproceedings{fedstar,
  title={Federated learning on non-iid graphs via structural knowledge sharing},
  author={Tan, Yue and Liu, Yixin and Long, Guodong and Jiang, Jing and Lu, Qinghua and Zhang, Chengqi},
  booktitle={Proceedings of the AAAI conference on artificial intelligence},
  volume={37},
  number={8},
  pages={9953--9961},
  year={2023}
}

@inproceedings{fediih,
  title={Modeling inter-intra heterogeneity for graph federated learning},
  author={Yu, Wentao and Chen, Shuo and Tong, Yongxin and Gu, Tianlong and Gong, Chen},
  booktitle={Proceedings of the AAAI Conference on Artificial Intelligence},
  volume={39},
  number={21},
  pages={22236--22244},
  year={2025}
}

@article{fedspa,
  title={FedSPA : Generalizable Federated Graph Learning under Homophily Heterogeneity},
  author={Zihan Tan and Guancheng Wan and Wenke Huang and He Li and Guibin Zhang and Carl Yang and Mang Ye},
  journal={2025 IEEE/CVF Conference on Computer Vision and Pattern Recognition (CVPR)},
  year={2025},
  pages={15464-15475},
  url={https://api.semanticscholar.org/CorpusID:277057037}
}

@article{s2fgl,
  title={S2FGL: Spatial Spectral Federated Graph Learning},
  author={Tan, Zihan and Huang, Suyuan and Wan, Guancheng and Huang, Wenke and Li, He and Ye, Mang},
  journal={arXiv preprint arXiv:2507.02409},
  year={2025}
}

@article{gcfl_plus,
  title={Federated graph classification over non-iid graphs},
  author={Xie, Han and Ma, Jing and Xiong, Li and Yang, Carl},
  journal={Advances in neural information processing systems},
  volume={34},
  pages={18839--18852},
  year={2021}
}

@inproceedings{fedpka,
  title={FedPKA: Federated Graph-Level Clustering Network with Personalized Knowledge Aggregation},
  author={Wu, Junlong and Wang, Haotian and Liu, Jingxin and Tu, Wenxuan and Han, Renda and Cheng, Jieren and Tang, Xiangyan},
  booktitle={International Conference on Intelligent Computing},
  pages={27--38},
  year={2025},
  organization={Springer}
}

@article{fgl_survey,
  title={Federated graph neural networks: Overview, techniques, and challenges},
  author={Liu, Rui and Xing, Pengwei and Deng, Zichao and Li, Anran and Guan, Cuntai and Yu, Han},
  journal={IEEE transactions on neural networks and learning systems},
  year={2024},
  publisher={IEEE}
}

@article{data_centric_fgl_survey,
  title={A comprehensive data-centric overview of federated graph learning},
  author={Wu, Zhengyu and Li, Xunkai and Zhu, Yinlin and Chen, Zekai and Yan, Guochen and Yan, Yanyu and Zhang, Hao and Ai, Yuming and Jin, Xinmo and Li, Rong-Hua and others},
  journal={arXiv preprint arXiv:2507.16541},
  year={2025}
}

@article{openfgl,
  title={Openfgl: A comprehensive benchmark for federated graph learning},
  author={Li, Xunkai and Zhu, Yinlin and Pang, Boyang and Yan, Guochen and Yan, Yeyu and Li, Zening and Wu, Zhengyu and Zhang, Wentao and Li, Rong-Hua and Wang, Guoren},
  journal={arXiv preprint arXiv:2408.16288},
  year={2024}
}

@misc{fedgfm,
      title={Towards Effective Federated Graph Foundation Model via Mitigating Knowledge Entanglement}, 
      author={Yinlin Zhu and Xunkai Li and Jishuo Jia and Miao Hu and Di Wu and Meikang Qiu},
      year={2025},
      eprint={2505.12684},
      archivePrefix={arXiv},
      primaryClass={cs.LG},
      url={https://arxiv.org/abs/2505.12684}, 
}

@article{gpt4,
  title={Gpt-4 technical report},
  author={Achiam, Josh and Adler, Steven and Agarwal, Sandhini and Ahmad, Lama and Akkaya, Ilge and Aleman, Florencia Leoni and Almeida, Diogo and Altenschmidt, Janko and Altman, Sam and Anadkat, Shyamal and others},
  journal={arXiv preprint arXiv:2303.08774},
  year={2023}
}

@article{sora,
  title={Sora: A review on background, technology, limitations, and opportunities of large vision models},
  author={Liu, Yixin and Zhang, Kai and Li, Yuan and Yan, Zhiling and Gao, Chujie and Chen, Ruoxi and Yuan, Zhengqing and Huang, Yue and Sun, Hanchi and Gao, Jianfeng and others},
  journal={arXiv preprint arXiv:2402.17177},
  year={2024}
}

@article{fedssp,
  title={FedSSP: federated graph learning with spectral knowledge and personalized preference},
  author={Tan, Zihan and Wan, Guancheng and Huang, Wenke and Ye, Mang},
  journal={Advances in Neural Information Processing Systems},
  volume={37},
  pages={34561--34581},
  year={2024}
}

@inproceedings{power,
  title={Federated continual graph learning},
  author={Zhu, Yinlin and Hu, Miao and Wu, Di},
  booktitle={Proceedings of the 31st ACM SIGKDD Conference on Knowledge Discovery and Data Mining V. 2},
  pages={4203--4213},
  year={2025}
}

@inproceedings{liu2022efficient,
  title={Efficient one-pass multi-view subspace clustering with consensus anchors},
  author={Liu, Suyuan and Wang, Siwei and Zhang, Pei and Xu, Kai and Liu, Xinwang and Zhang, Changwang and Gao, Feng},
  booktitle={Proceedings of the AAAI conference on artificial intelligence},
  volume={36},
  number={7},
  pages={7576--7584},
  year={2022}
}

@article{li2022high,
  title={High-order correlation preserved incomplete multi-view subspace clustering},
  author={Li, Zhenglai and Tang, Chang and Zheng, Xiao and Liu, Xinwang and Zhang, Wei and Zhu, En},
  journal={IEEE Transactions on Image Processing},
  volume={31},
  pages={2067--2080},
  year={2022},
  publisher={IEEE}
}

@article{gong2022deep,
  title={Deep fusion clustering network with reliable structure preservation},
  author={Gong, Lei and Tu, Wenxuan and Zhou, Sihang and Zhao, Long and Liu, Zhe and Liu, Xinwang},
  journal={IEEE Transactions on Neural Networks and Learning Systems},
  volume={35},
  number={6},
  pages={7792--7803},
  year={2022},
  publisher={IEEE}
}

@inproceedings{yang2023cluster,
  title={Cluster-guided contrastive graph clustering network},
  author={Yang, Xihong and Liu, Yue and Zhou, Sihang and Wang, Siwei and Tu, Wenxuan and Zheng, Qun and Liu, Xinwang and Fang, Liming and Zhu, En},
  booktitle={Proceedings of the AAAI conference on artificial intelligence},
  volume={37},
  number={9},
  pages={10834--10842},
  year={2023}
}

@article{lin2021multi,
  title={Multi-view attributed graph clustering},
  author={Lin, Zhiping and Kang, Zhao and Zhang, Lizong and Tian, Ling},
  journal={IEEE Transactions on knowledge and data engineering},
  volume={35},
  number={2},
  pages={1872--1880},
  year={2021},
  publisher={IEEE}
}

@inproceedings{tu2024attribute,
  title={Attribute-missing graph clustering network},
  author={Tu, Wenxuan and Guan, Renxiang and Zhou, Sihang and Ma, Chuan and Peng, Xin and Cai, Zhiping and Liu, Zhe and Cheng, Jieren and Liu, Xinwang},
  booktitle={Proceedings of the AAAI Conference on Artificial Intelligence},
  volume={38},
  number={14},
  pages={15392--15401},
  year={2024}
}

@misc{fgl_unlearning,
      title={Federated Graph Unlearning}, 
      author={Yuming Ai and Xunkai Li and Jiaqi Chao and Bowen Fan and Zhengyu Wu and Yinlin Zhu and Rong-Hua Li and Guoren Wang},
      year={2025},
      eprint={2508.02485},
      archivePrefix={arXiv},
      primaryClass={cs.LG},
      url={https://arxiv.org/abs/2508.02485}, 
}

@article{graph_clustering_survey1,
  title={A survey of deep graph clustering: Taxonomy, challenge, application, and open resource},
  author={Liu, Yue and Xia, Jun and Zhou, Sihang and Yang, Xihong and Liang, Ke and Fan, Chenchen and Zhuang, Yan and Li, Stan Z and Liu, Xinwang and He, Kunlun},
  journal={arXiv preprint arXiv:2211.12875},
  year={2022}
}

@article{graph_clustering_survey2,
  title={Clustering attributed graphs: models, measures and methods},
  author={Bothorel, C{\'e}cile and Cruz, Juan David and Magnani, Matteo and Micenkova, Barbora},
  journal={Network Science},
  volume={3},
  number={3},
  pages={408--444},
  year={2015},
  publisher={Cambridge University Press}
}

@article{clip,
  author       = {Alec Radford and
                  Jong Wook Kim and
                  Chris Hallacy and
                  Aditya Ramesh and
                  Gabriel Goh and
                  Sandhini Agarwal and
                  Girish Sastry and
                  Amanda Askell and
                  Pamela Mishkin and
                  Jack Clark and
                  Gretchen Krueger and
                  Ilya Sutskever},
  title        = {Learning Transferable Visual Models From Natural Language Supervision},
  journal      = {CoRR},
  volume       = {abs/2103.00020},
  year         = {2021},
  url          = {https://arxiv.org/abs/2103.00020},
  eprinttype    = {arXiv},
  eprint       = {2103.00020},
  bibsource    = {dblp computer science bibliography, https://dblp.org}
}

@article{bt_cv_contrastive,
  author       = {Jure Zbontar and
                  Li Jing and
                  Ishan Misra and
                  Yann LeCun and
                  St{\'{e}}phane Deny},
  title        = {Barlow Twins: Self-Supervised Learning via Redundancy Reduction},
  journal      = {CoRR},
  volume       = {abs/2103.03230},
  year         = {2021},
  url          = {https://arxiv.org/abs/2103.03230},
  eprinttype    = {arXiv},
  eprint       = {2103.03230},
  bibsource    = {dblp computer science bibliography, https://dblp.org}
}

@article{graph_con_1,
  title={Relational symmetry based knowledge graph contrastive learning},
  author={Liang, Ke and Liu, Yue and Zhou, Sihang and Liu, Xinwang and Tu, Wenxuan},
  journal={arXiv preprint arXiv:2211.10738},
  year={2022}
}

@article{graph_con_2,
  title={An empirical study of graph contrastive learning},
  author={Zhu, Yanqiao and Xu, Yichen and Liu, Qiang and Wu, Shu},
  journal={arXiv preprint arXiv:2109.01116},
  year={2021}
}

@inproceedings{mcqueen1967some,
  title={Some methods of classification and analysis of multivariate observations},
  author={McQueen, James B},
  booktitle={Proc. of 5th Berkeley Symposium on Math. Stat. and Prob.},
  pages={281--297},
  year={1967}
}

@article{gcn,
  title={Semi-supervised classification with graph convolutional networks},
  author={Kipf, Thomas N and Welling, Max},
  journal={arXiv preprint arXiv:1609.02907},
  year={2016}
}

@inproceedings{sgc,
  title={Simplifying graph convolutional networks},
  author={Wu, Felix and Souza, Amauri and Zhang, Tianyi and Fifty, Christopher and Yu, Tao and Weinberger, Kilian},
  booktitle={International conference on machine learning},
  pages={6861--6871},
  year={2019},
  organization={Pmlr}
}

@inproceedings{optuna,
  title={Optuna: A next-generation hyperparameter optimization framework},
  author={Akiba, Takuya and Sano, Shotaro and Yanase, Toshihiko and Ohta, Takeru and Koyama, Masanori},
  booktitle={Proceedings of the 25th ACM SIGKDD international conference on knowledge discovery \& data mining, KDD},
  pages={2623--2631},
  year={2019}
}

@book{matrix_analysis,
  title={Matrix analysis},
  author={Horn, Roger A and Johnson, Charles R},
  year={2012},
  publisher={Cambridge university press}
}

@inproceedings{scaffold,
  title={Scaffold: Stochastic controlled averaging for federated learning},
  author={Karimireddy, Sai Praneeth and Kale, Satyen and Mohri, Mehryar and Reddi, Sashank and Stich, Sebastian and Suresh, Ananda Theertha},
  booktitle={International conference on machine learning},
  pages={5132--5143},
  year={2020},
  organization={PMLR}
}

@article{fedopt,
  title={Fedopt: Towards communication efficiency and privacy preservation in federated learning},
  author={Asad, Muhammad and Moustafa, Ahmed and Ito, Takayuki},
  journal={Applied Sciences},
  volume={10},
  number={8},
  pages={2864},
  year={2020},
  publisher={MDPI}
}

@article{fedper,
  title={Federated learning with personalization layers},
  author={Arivazhagan, Manoj Ghuhan and Aggarwal, Vinay and Singh, Aaditya Kumar and Choudhary, Sunav},
  journal={arXiv preprint arXiv:1912.00818},
  year={2019}
}

@article{fl_survey1,
  title={A survey on federated learning},
  author={Zhang, Chen and Xie, Yu and Bai, Hang and Yu, Bin and Li, Weihong and Gao, Yuan},
  journal={Knowledge-Based Systems},
  volume={216},
  pages={106775},
  year={2021},
  publisher={Elsevier}
}

@article{fl_survey2,
  title={A survey on federated learning: challenges and applications},
  author={Wen, Jie and Zhang, Zhixia and Lan, Yang and Cui, Zhihua and Cai, Jianghui and Zhang, Wensheng},
  journal={International journal of machine learning and cybernetics},
  volume={14},
  number={2},
  pages={513--535},
  year={2023},
  publisher={Springer}
}

@inproceedings{dennis2021heterogeneity,
  title={Heterogeneity for the win: One-shot federated clustering},
  author={Dennis, Don Kurian and Li, Tian and Smith, Virginia},
  booktitle={International conference on machine learning},
  pages={2611--2620},
  year={2021},
  organization={PMLR}
}

@inproceedings{moon,
  title={Model-contrastive federated learning},
  author={Li, Qinbin and He, Bingsheng and Song, Dawn},
  booktitle={Proceedings of the IEEE/CVF conference on computer vision and pattern recognition},
  pages={10713--10722},
  year={2021}
}

@inproceedings{feddc,
  title={Feddc: Federated learning with non-iid data via local drift decoupling and correction},
  author={Gao, Liang and Fu, Huazhu and Li, Li and Chen, Yingwen and Xu, Ming and Xu, Cheng-Zhong},
  booktitle={Proceedings of the IEEE/CVF conference on computer vision and pattern recognition},
  pages={10112--10121},
  year={2022}
}

@article{hu2020ogb,
  title={Open graph benchmark: Datasets for machine learning on graphs},
  author={Hu, Weihua and Fey, Matthias and Zitnik, Marinka and Dong, Yuxiao and Ren, Hongyu and Liu, Bowen and Catasta, Michele and Leskovec, Jure},
  journal={Advances in Neural Information Processing Systems, NeurIPS},
  year={2020}
}

@article{graphsage,
  title={Inductive representation learning on large graphs},
  author={Hamilton, Will and Ying, Zhitao and Leskovec, Jure},
  journal={Advances in neural information processing systems},
  volume={30},
  year={2017}
}

\appendix
\newpage
\section{More Related Works}

\textbf{Federated Learning.}
Federated learning (FL) is a distributed machine learning paradigm that enables multiple clients to collaboratively train a shared model without exchanging raw data, thereby preserving data privacy~\cite{fedavg, fedprox}. The canonical FedAvg algorithm~\cite{fedavg} aggregates locally trained model parameters via weighted averaging. Subsequent works address key challenges including statistical heterogeneity~\cite{fedprox, scaffold}, communication efficiency~\cite{fedopt}, and personalization~\cite{fedper}. Comprehensive reviews can be found in recent surveys~\cite{fl_survey1, fl_survey2}.

\vspace{+0.1cm}
\noindent \textbf{Federated Clustering in Non-Graph Domains.}
Federated clustering extends unsupervised learning to decentralized settings, aiming to discover latent patterns across distributed datasets. Early works adapt classical clustering algorithms to federated scenarios. For instance, $k$-FED~\cite{dennis2021heterogeneity} develops a heterogeneity-aware federated k-means that accounts for non-IID data distributions. However, the assumption of fixed cluster cardinality across clients is violated in federated graph scenarios, where subgraphs exhibit diverse community distributions.

\section{Dataset Details}
\label{appendix: dataset details}

This section provides detailed information about the datasets used in all experiments.

\vspace{+0.1cm}
\noindent \textbf{CiteSeer and PubMed}~\cite{Yang16cora} are widely used citation network datasets, where nodes represent papers and edges denote citation relationships. Node features are bag-of-words vectors indicating the presence or absence of specific words in each paper. These datasets are frequently used for node classification tasks.

\vspace{+0.1cm}
\noindent \textbf{Amazon Photo and Amazon Computers}~\cite{shchur2018amazon_datasets} are subsets of the Amazon co-purchase graph, where nodes represent individual products, and edges signify that two products are frequently bought together. Node features are derived from product reviews, represented as bag-of-words vectors capturing the textual information associated with each item. These datasets are commonly used for graph-based tasks such as node classification in recommendation systems.

\vspace{+0.1cm}
\noindent \textbf{Questions}~\cite{platonov2023hete_gnn_survey4} is derived from the question-answering platform Yandex Q. Nodes represent users, and an edge exists between two nodes if one user answers another user's question within a one-year timeframe (from September 2021 to August 2022). The objective is to predict which users remained active on the website (i.e., were not deleted or blocked) by the end of the specified period. Node features utilize the average FastText embeddings of words found in user descriptions.

\vspace{+0.1cm}
\noindent \textbf{ogb-arxiv}~\cite{hu2020ogb} is a citation network from the Microsoft Academic Graph where nodes represent arXiv computer science papers and edges denote citation relationships. The dataset contains 169,343 nodes and 1,166,243 edges, with 128-dimensional node features derived from paper titles and abstracts. Papers are grouped into 40 subject areas, making it suitable for evaluating graph clustering methods on large-scale academic networks.

\vspace{+0.1cm}
\noindent \textbf{ogb-products}~\cite{hu2020ogb} is an Amazon product co-purchasing network in which nodes represent products and edges indicate that two products are frequently bought together. It contains 2,449,029 nodes and 61,859,140 edges, with 100-dimensional bag-of-words features from product descriptions. Products belong to 47 categories, providing a large and diverse benchmark for graph learning.

\vspace{+0.1cm}
\noindent \textbf{Reddit}~\cite{graphsage} is a social network dataset where nodes represent Reddit posts and edges connect posts if a user comments on another post. The graph includes 232,965 nodes and 11,606,919 edges, with 602-dimensional content-based features. Nodes are associated with 41 communities (subreddits), making the dataset useful for evaluating graph clustering in social network settings.

\section{More Experimental Setups}
\label{appendix: more experimental setups}

\textbf{Hyperparameters.} For our proposed AdaFGC, the number of over-complete global community anchors is set to 10 times the real number of classes for all datasets. The smoothness rate $\alpha$ is searched within $\{10^{-2}, 10^{-1}, 1, 10, 100\}$, the contrastive learning temperature $\tau$ is fixed to $1$, and the propagation depth $L$ is fixed to $4$. Trade-off parameters $\lambda_\mathrm{anc}, \lambda_\mathrm{node}, \lambda_\mathrm{topo}$ are searched within $\{5\times10^{-2}, 1\times 10^{-1}, 5\times 10^{-1}, 1, 5, 10\}$, and the merge threshold $\eta$ is searched from $\{1\times 10^{-8}, 1\times 10^{-6}, 1\times 10^{-4}\}$. For the topology-level contrastive loss $\mathcal{L}_{\mathrm{topo}}$, we employ random walks with the following parameters: walks per node is fixed to $2$, walk length is fixed to $2$, and context size is fixed to $3$. For baselines, we adopt the hyperparameter configurations reported in their original papers whenever available. When unspecified, we employ automated hyperparameter optimization using the Optuna framework~\cite{optuna}. Unless the corresponding method specifies a particular model architecture, both the client and server employ a four-layer GNN to generate node embeddings, with hidden layer dimensions of 500-500-2000-10. Additionally, a one-layer MLP is used to produce local clustering signals, which are subsequently uploaded to the server. We conduct 50 communication rounds, each comprising 10 local training epochs. For model optimization, we adopt the Adam optimizer with a learning rate of $1\times 10^{-3}$. Results are reported as the mean and standard deviation across 10 runs.

\section{More Experiments}
\label{appendix: more experiments}

\subsection{Varying Participating Clients}

To investigate how AdaFGC performs under different client participation settings, we further vary the number of participating clients from $K \in \{5, 20\}$ by partitioning the datasets accordingly. Each client receives a subgraph with approximately equal number of nodes. We evaluate clustering performance using ACC, NMI, ARI, and F1 metrics across all five datasets.

The experimental results are presented in Table.~\ref{tab: vary_5} and Table.~\ref{tab: vary_20}. As observed, AdaFGC consistently achieves the best performance across all datasets and metrics under both client settings. When $K=5$, AdaFGC outperforms the second-best baseline FedNCN by substantial margins, with ACC improvements of +19.87\% on CiteSeer and +17.79 on Amazon-Computer. For NMI, which measures the mutual information between predicted and ground-truth clusters, AdaFGC demonstrates even more significant gains of +15.87\% on CiteSeer and +5.77\% on Amazon-Computer. The ARI results show improvements of +2.96\% on CiteSeer and +3.53\% on Amazon-Computer. When $K=20$, AdaFGC achieves ACC gains of +4.83\% on CiteSeer and +8.40\% on Amazon-Computer, with corresponding NMI improvements of +2.23\% and +2.91\%, respectively. Notably, traditional federated graph learning methods exhibit significantly inferior performance, with many achieving near-random clustering results.
\begin{table*}[htbp]
    \setlength{\abovecaptionskip}{0.2cm}
    \setlength{\belowcaptionskip}{-0.2cm}
    \centering
    \caption{\textbf{Node clustering performance} on five datasets (5 clients). The best, second best and third best results are highlighted in \textcolor{darkred}{\textbf{red}}, \textcolor{royalblue}{\textbf{blue}} and \textcolor{orange}{\textbf{orange}}, respectively.}
    \label{tab: vary_5}
    \footnotesize 
    \renewcommand{\arraystretch}{1.1}
    \resizebox{\linewidth}{!}{
    \setlength{\tabcolsep}{1.2mm}{
    \begin{tabular}{c!{\vrule width 0.1pt}
    cc!{\vrule width 0.1pt}
    cc!{\vrule width 0.1pt}
    cc!{\vrule width 0.1pt}
    cc!{\vrule width 0.1pt}
    cc}
    \hline\thickhline
    \rowcolor{gray!10}
    
    & \multicolumn{2}{c!{\vrule width 0.1pt}}{\textbf{CiteSeer}}  
    & \multicolumn{2}{c!{\vrule width 0.1pt}}{\textbf{PubMed}}  
    & \multicolumn{2}{c!{\vrule width 0.1pt}}{\textbf{Amazon-Computer}}  
    & \multicolumn{2}{c!{\vrule width 0.1pt}}{\textbf{Amazon-Photo}}  
    & \multicolumn{2}{c}{\textbf{Questions}}  \\
    
    \cline{2-11}
    \rowcolor{gray!10}
    \multirow{-2}{*}{\diagbox[width=8.5em,height=2.4em]{\textbf{Methods}}{\textbf{Datasets}}}  
    & ACC & NMI 
    & ACC & NMI 
    & ACC & NMI 
    & ACC & NMI 
    & ACC & NMI \\
    
    \hline
    FedSage+$^*$ & $15.82_{\pm 1.68}$ & $2.58_{\pm 0.38}$ & $48.91_{\pm 2.21}$ & $8.42_{\pm 1.65}$ & $21.18_{\pm 0.89}$ & $4.96_{\pm 1.69}$ & $31.87_{\pm 1.84}$ & $10.68_{\pm 2.18}$ & $77.52_{\pm 2.03}$ & \textcolor{orange}{$\mathbf{0.91_{\pm 0.46}}$} \\
    \rowcolor{gray!10}
    FedPUB$^*$ & $16.52_{\pm 3.31}$ & $0.02_{\pm 0.00}$ & \textcolor{orange}{$\mathbf{51.28_{\pm 9.28}}$} & $0.00_{\pm 0.01}$ & $11.48_{\pm 12.68}$ & $0.00_{\pm 0.00}$ & $14.56_{\pm 7.46}$ & $0.00_{\pm 0.00}$ & $58.73_{\pm 15.12}$ & $0.00_{\pm 0.00}$ \\
    FedTAD$^*$ & $17.24_{\pm 3.25}$ & $1.18_{\pm 1.39}$ & $35.73_{\pm 2.43}$ & $1.13_{\pm 0.96}$ & $9.61_{\pm 4.28}$ & $0.00_{\pm 0.00}$ & $13.21_{\pm 4.87}$ & $4.02_{\pm 6.64}$ & $55.38_{\pm 17.18}$ & $0.17_{\pm 0.13}$ \\
    \rowcolor{gray!10}
    FedGTA$^*$ & \textcolor{orange}{$\mathbf{23.58_{\pm 0.78}}$} & $5.54_{\pm 1.03}$ & $45.67_{\pm 2.44}$ & $6.01_{\pm 1.88}$ & $20.98_{\pm 1.72}$ & \textcolor{orange}{$\mathbf{11.79_{\pm 2.86}}$} & \textcolor{orange}{$\mathbf{34.82_{\pm 1.91}}$} & \textcolor{orange}{$\mathbf{13.97_{\pm 2.95}}$} & $87.16_{\pm 4.35}$ & $0.00_{\pm 0.09}$ \\
    FedIIH$^*$ & $13.92_{\pm 2.84}$ & $0.00_{\pm 0.00}$ & $17.23_{\pm 5.94}$ & $0.04_{\pm 0.09}$ & $13.24_{\pm 2.11}$ & $0.00_{\pm 0.00}$ & $14.05_{\pm 2.11}$ & $0.00_{\pm 0.00}$ & $75.83_{\pm 3.09}$ & $0.00_{\pm 0.00}$ \\
    \rowcolor{gray!10}
    FGSSL$^*$ & $11.35_{\pm 2.02}$ & $2.41_{\pm 1.60}$ & $40.85_{\pm 3.58}$ & $7.59_{\pm 2.99}$ & $16.69_{\pm 2.56}$ & $2.89_{\pm 2.22}$ & $22.41_{\pm 2.90}$ & $3.50_{\pm 1.50}$ & \textcolor{orange}{$\mathbf{93.27_{\pm 4.00}}$} & $0.24_{\pm 0.22}$ \\
    FGGP$^*$ & $11.55_{\pm 2.14}$ & $2.28_{\pm 0.49}$ & $49.42_{\pm 2.63}$ & $7.63_{\pm 2.62}$ & \textcolor{orange}{$\mathbf{21.93_{\pm 4.01}}$} & $5.05_{\pm 0.92}$ & $17.94_{\pm 1.26}$ & $4.17_{\pm 1.05}$ & $84.89_{\pm 3.80}$ & $0.34_{\pm 0.50}$ \\
    \rowcolor{gray!10}
    FedSPA$^*$ & $21.59_{\pm 2.67}$ & \textcolor{orange}{$\mathbf{6.06_{\pm 2.98}}$} & $43.47_{\pm 3.95}$ & \textcolor{orange}{$\mathbf{8.97_{\pm 1.05}}$} & $12.34_{\pm 3.35}$ & $2.75_{\pm 2.62}$ & $20.47_{\pm 1.38}$ & $3.66_{\pm 0.08}$ & $83.65_{\pm 6.61}$ & $0.00_{\pm 0.07}$ \\
    S2FGL$^*$ & $12.43_{\pm 2.42}$ & $2.33_{\pm 2.71}$ & $38.61_{\pm 2.00}$ & $7.24_{\pm 1.67}$ & $10.62_{\pm 3.87}$ & $2.05_{\pm 2.67}$ & $11.32_{\pm 3.65}$ & $1.78_{\pm 0.64}$ & $69.78_{\pm 7.44}$ & $0.00_{\pm 0.09}$ \\
    \hline
    \rowcolor{gray!10}
    FedNCN & \textcolor{royalblue}{$\mathbf{56.21_{\pm 1.82}}$} & \textcolor{royalblue}{$\mathbf{13.58_{\pm 2.51}}$} & \textcolor{royalblue}{$\mathbf{64.87_{\pm 1.42}}$} & \textcolor{royalblue}{$\mathbf{10.81_{\pm 1.52}}$} & \textcolor{royalblue}{$\mathbf{65.63_{\pm 1.22}}$} & \textcolor{royalblue}{$\mathbf{24.01_{\pm 1.03}}$} & \textcolor{royalblue}{$\mathbf{75.72_{\pm 3.02}}$} & \textcolor{royalblue}{$\mathbf{38.09_{\pm 5.47}}$} & \textcolor{royalblue}{$\mathbf{96.81_{\pm 2.09}}$} & \textcolor{royalblue}{$\mathbf{1.93_{\pm 0.88}}$} \\
    AdaFGC (Ours) & \textcolor{darkred}{$\mathbf{76.08_{\pm 1.45}}$} & \textcolor{darkred}{$\mathbf{29.45_{\pm 1.77}}$} & \textcolor{darkred}{$\mathbf{77.13_{\pm 1.54}}$} & \textcolor{darkred}{$\mathbf{17.12_{\pm 2.18}}$} & \textcolor{darkred}{$\mathbf{83.42_{\pm 2.33}}$} & \textcolor{darkred}{$\mathbf{29.78_{\pm 2.88}}$} & \textcolor{darkred}{$\mathbf{84.11_{\pm 1.14}}$} & \textcolor{darkred}{$\mathbf{44.05_{\pm 7.19}}$} & \textcolor{darkred}{$\mathbf{97.54_{\pm 3.00}}$} & \textcolor{darkred}{$\mathbf{6.59_{\pm 3.00}}$} \\
    \hline\thickhline
    \rowcolor{gray!10}
    
    & \multicolumn{2}{c!{\vrule width 0.1pt}}{\textbf{CiteSeer}}  
    & \multicolumn{2}{c!{\vrule width 0.1pt}}{\textbf{PubMed}}  
    & \multicolumn{2}{c!{\vrule width 0.1pt}}{\textbf{Amazon-Computer}}  
    & \multicolumn{2}{c!{\vrule width 0.1pt}}{\textbf{Amazon-Photo}}  
    & \multicolumn{2}{c}{\textbf{Questions}}  \\
    
    \cline{2-11}
    \rowcolor{gray!10}
    \multirow{-2}{*}{\diagbox[width=8.5em,height=2.4em]{\textbf{Methods}}{\textbf{Datasets}}}  
    & ARI & F1 
    & ARI & F1 
    & ARI & F1 
    & ARI & F1 
    & ARI & F1 \\
    
    \hline
    FedSage+$^*$ & $0.17_{\pm 0.57}$ & $11.18_{\pm 0.73}$ & $0.03_{\pm 0.01}$ & $25.63_{\pm 2.88}$ & $3.89_{\pm 2.05}$ & $14.53_{\pm 2.01}$ & $11.19_{\pm 2.67}$ & \textcolor{orange}{$\mathbf{18.33_{\pm 1.09}}$} & $1.28_{\pm 0.48}$ & $38.42_{\pm 0.78}$ \\
    \rowcolor{gray!10}
    FedPUB$^*$ & $0.00_{\pm 0.00}$ & $4.02_{\pm 0.55}$ & $0.00_{\pm 0.00}$ & $13.71_{\pm 3.12}$ & $0.00_{\pm 0.00}$ & $2.24_{\pm 2.35}$ & $0.00_{\pm 0.00}$ & $3.37_{\pm 1.70}$ & $0.00_{\pm 0.00}$ & $30.07_{\pm 12.21}$ \\
    FedTAD$^*$ & $0.59_{\pm 0.98}$ & $6.98_{\pm 2.49}$ & $1.54_{\pm 1.24}$ & $9.68_{\pm 2.63}$ & $0.00_{\pm 0.00}$ & $2.56_{\pm 1.51}$ & $0.00_{\pm 0.09}$ & $1.95_{\pm 1.13}$ & $-0.42_{\pm 0.33}$ & $35.07_{\pm 12.07}$ \\
    \rowcolor{gray!10}
    FedGTA$^*$ & $3.46_{\pm 0.68}$ & \textcolor{orange}{$\mathbf{19.53_{\pm 1.03}}$} & $4.64_{\pm 1.91}$ & $19.26_{\pm 0.70}$ & \textcolor{orange}{$\mathbf{8.20_{\pm 2.20}}$} & $8.77_{\pm 0.99}$ & \textcolor{orange}{$\mathbf{11.63_{\pm 2.00}}$} & $11.73_{\pm 1.13}$ & \textcolor{orange}{$\mathbf{1.73_{\pm 0.60}}$} & $49.44_{\pm 1.73}$ \\
    FedIIH$^*$ & $0.00_{\pm 0.26}$ & $3.65_{\pm 0.61}$ & $0.02_{\pm 0.03}$ & $14.33_{\pm 2.78}$ & $0.00_{\pm 0.00}$ & $1.83_{\pm 0.92}$ & $0.00_{\pm 0.00}$ & $3.02_{\pm 1.51}$ & $0.00_{\pm 0.00}$ & $30.04_{\pm 12.26}$ \\
    \rowcolor{gray!10}
    FGSSL$^*$ & $1.92_{\pm 0.91}$ & $8.58_{\pm 2.53}$ & $8.41_{\pm 0.38}$ & $30.01_{\pm 2.88}$ & $3.28_{\pm 0.01}$ & $11.17_{\pm 1.58}$ & $3.86_{\pm 0.04}$ & $15.75_{\pm 0.56}$ & $0.15_{\pm 0.30}$ & $37.18_{\pm 2.76}$ \\
    FGGP$^*$ & $2.53_{\pm 1.80}$ & $8.42_{\pm 1.20}$ & $5.63_{\pm 0.84}$ & \textcolor{orange}{$\mathbf{36.49_{\pm 3.27}}$} & $2.63_{\pm 2.03}$ & \textcolor{orange}{$\mathbf{16.87_{\pm 3.70}}$} & $3.70_{\pm 0.54}$ & $12.67_{\pm 0.55}$ & $1.38_{\pm 0.11}$ & $46.88_{\pm 3.98}$ \\
    \rowcolor{gray!10}
    FedSPA$^*$ & \textcolor{orange}{$\mathbf{4.41_{\pm 1.20}}$} & $15.14_{\pm 1.41}$ & \textcolor{orange}{$\mathbf{10.04_{\pm 0.46}}$} & $30.48_{\pm 2.45}$ & $1.92_{\pm 1.34}$ & $8.79_{\pm 0.72}$ & $3.90_{\pm 2.18}$ & $12.90_{\pm 2.14}$ & $0.28_{\pm 0.67}$ & $37.09_{\pm 4.70}$ \\
    S2FGL$^*$ & $2.56_{\pm 0.91}$ & $9.07_{\pm 1.55}$ & $7.13_{\pm 0.36}$ & $24.04_{\pm 5.07}$ & $2.00_{\pm 2.00}$ & $7.73_{\pm 1.97}$ & $1.86_{\pm 0.49}$ & $7.06_{\pm 1.94}$ & $0.40_{\pm 0.52}$ & \textcolor{orange}{$\mathbf{49.94_{\pm 4.12}}$} \\
    \hline
    \rowcolor{gray!10}
    FedNCN & \textcolor{royalblue}{$\mathbf{20.12_{\pm 3.80}}$} & \textcolor{royalblue}{$\mathbf{25.36_{\pm 1.45}}$} & \textcolor{royalblue}{$\mathbf{16.14_{\pm 4.11}}$} & \textcolor{royalblue}{$\mathbf{27.78_{\pm 2.99}}$} & \textcolor{royalblue}{$\mathbf{22.13_{\pm 2.33}}$} & \textcolor{royalblue}{$\mathbf{25.09_{\pm 2.18}}$} & \textcolor{royalblue}{$\mathbf{44.08_{\pm 7.18}}$} & \textcolor{royalblue}{$\mathbf{29.78_{\pm 2.88}}$} & \textcolor{royalblue}{$\mathbf{6.59_{\pm 3.00}}$} & \textcolor{royalblue}{$\mathbf{54.11_{\pm 1.77}}$} \\
    AdaFGC (Ours) & \textcolor{darkred}{$\mathbf{23.08_{\pm 3.51}}$} & \textcolor{darkred}{$\mathbf{32.89_{\pm 2.54}}$} & \textcolor{darkred}{$\mathbf{19.40_{\pm 4.23}}$} & \textcolor{darkred}{$\mathbf{43.63_{\pm 3.20}}$} & \textcolor{darkred}{$\mathbf{25.66_{\pm 3.99}}$} & \textcolor{darkred}{$\mathbf{34.72_{\pm 0.71}}$} & \textcolor{darkred}{$\mathbf{47.35_{\pm 6.76}}$} & \textcolor{darkred}{$\mathbf{33.95_{\pm 2.70}}$} & \textcolor{darkred}{$\mathbf{7.77_{\pm 2.37}}$} & \textcolor{darkred}{$\mathbf{61.08_{\pm 2.16}}$} \\
    \hline\thickhline
    \end{tabular}
    }}
    \end{table*}

\begin{table*}[htbp]
    \setlength{\abovecaptionskip}{0.2cm}
    \setlength{\belowcaptionskip}{-0.2cm}
    \centering
    \caption{\textbf{Node clustering performance} on five datasets (20 clients). The best, second best and third best results are highlighted in \textcolor{darkred}{\textbf{red}}, \textcolor{royalblue}{\textbf{blue}} and \textcolor{orange}{\textbf{orange}}, respectively.}
    \label{tab: vary_20}
    \footnotesize 
    \renewcommand{\arraystretch}{1.1}
    \resizebox{\linewidth}{!}{
    \setlength{\tabcolsep}{1.2mm}{
    \begin{tabular}{c!{\vrule width 0.1pt}
    cc!{\vrule width 0.1pt}
    cc!{\vrule width 0.1pt}
    cc!{\vrule width 0.1pt}
    cc!{\vrule width 0.1pt}
    cc}
    \hline\thickhline
    \rowcolor{gray!10}
    
    & \multicolumn{2}{c!{\vrule width 0.1pt}}{\textbf{CiteSeer}}  
    & \multicolumn{2}{c!{\vrule width 0.1pt}}{\textbf{PubMed}}  
    & \multicolumn{2}{c!{\vrule width 0.1pt}}{\textbf{Amazon-Computer}}  
    & \multicolumn{2}{c!{\vrule width 0.1pt}}{\textbf{Amazon-Photo}}  
    & \multicolumn{2}{c}{\textbf{Questions}}  \\
    
    \cline{2-11}
    \rowcolor{gray!10}
    \multirow{-2}{*}{\diagbox[width=8.5em,height=2.4em]{\textbf{Methods}}{\textbf{Datasets}}}  
    & ACC & NMI 
    & ACC & NMI 
    & ACC & NMI 
    & ACC & NMI 
    & ACC & NMI \\
    
    \hline
    FedSage+$^*$ & $15.28_{\pm 1.99}$ & $9.35_{\pm 0.20}$ & $42.96_{\pm 3.16}$ & $1.92_{\pm 1.70}$ & $20.26_{\pm 1.25}$ & $6.93_{\pm 1.33}$ & \textcolor{orange}{$\mathbf{36.04_{\pm 0.95}}$} & $11.05_{\pm 0.23}$ & $74.56_{\pm 0.62}$ & $0.63_{\pm 0.71}$ \\
    \rowcolor{gray!10}
    FedPUB$^*$ & $10.96_{\pm 6.35}$ & $0.00_{\pm 0.00}$ & $27.77_{\pm 9.07}$ & $0.00_{\pm 0.00}$ & $13.98_{\pm 11.90}$ & $0.00_{\pm 0.00}$ & $12.47_{\pm 6.67}$ & \textcolor{orange}{$\mathbf{14.89_{\pm 0.00}}$} & $39.73_{\pm 15.07}$ & $0.01_{\pm 0.02}$ \\
    FedTAD$^*$ & $17.43_{\pm 1.51}$ & $3.35_{\pm 3.52}$ & $35.82_{\pm 5.13}$ & $0.43_{\pm 0.68}$ & $14.51_{\pm 11.04}$ & $4.98_{\pm 1.31}$ & $11.55_{\pm 4.86}$ & $8.27_{\pm 1.02}$ & $67.75_{\pm 13.69}$ & $0.06_{\pm 0.07}$ \\
    \rowcolor{gray!10}
    FedGTA$^*$ & \textcolor{orange}{$\mathbf{27.52_{\pm 1.59}}$} & \textcolor{orange}{$\mathbf{12.10_{\pm 2.04}}$} & $47.83_{\pm 1.67}$ & $0.02_{\pm 0.72}$ & \textcolor{orange}{$\mathbf{21.66_{\pm 1.23}}$} & \textcolor{orange}{$\mathbf{7.10_{\pm 0.94}}$} & $28.83_{\pm 1.15}$ & $10.60_{\pm 0.74}$ & \textcolor{orange}{$\mathbf{85.03_{\pm 2.34}}$} & $0.71_{\pm 0.68}$ \\
    FedIIH$^*$ & $17.99_{\pm 7.37}$ & $0.00_{\pm 0.00}$ & \textcolor{orange}{$\mathbf{65.05_{\pm 1.69}}$} & $7.31_{\pm 0.85}$ & $7.43_{\pm 4.49}$ & $0.02_{\pm 0.03}$ & $5.15_{\pm 0.15}$ & $0.00_{\pm 0.01}$ & $48.84_{\pm 3.04}$ & \textcolor{orange}{$\mathbf{1.06_{\pm 0.79}}$} \\
    \rowcolor{gray!10}
    FGSSL$^*$ & $9.32_{\pm 2.74}$ & $1.97_{\pm 1.31}$ & $33.42_{\pm 3.66}$ & $6.22_{\pm 2.27}$ & $13.68_{\pm 2.10}$ & $2.37_{\pm 1.82}$ & $18.37_{\pm 2.47}$ & $2.87_{\pm 1.23}$ & $76.39_{\pm 3.76}$ & $0.20_{\pm 0.18}$ \\
    FGGP$^*$ & $9.46_{\pm 3.39}$ & $1.86_{\pm 0.40}$ & $40.47_{\pm 2.15}$ & $6.25_{\pm 2.78}$ & $17.96_{\pm 4.47}$ & $4.13_{\pm 0.75}$ & $14.69_{\pm 1.03}$ & $3.42_{\pm 0.86}$ & $69.51_{\pm 3.85}$ & $0.28_{\pm 0.23}$ \\
    \rowcolor{gray!10}
    FedSPA$^*$ & $17.68_{\pm 2.64}$ & $4.97_{\pm 2.44}$ & $35.59_{\pm 3.97}$ & \textcolor{orange}{$\mathbf{7.35_{\pm 0.86}}$} & $10.11_{\pm 3.20}$ & $2.25_{\pm 2.14}$ & $16.76_{\pm 1.13}$ & $2.99_{\pm 0.06}$ & $68.51_{\pm 8.19}$ & $0.00_{\pm 0.05}$ \\
    S2FGL$^*$ & $10.18_{\pm 3.25}$ & $1.91_{\pm 2.22}$ & $31.63_{\pm 1.64}$ & $5.93_{\pm 1.37}$ & $8.70_{\pm 3.90}$ & $1.68_{\pm 2.19}$ & $9.27_{\pm 3.08}$ & $1.46_{\pm 0.52}$ & $57.15_{\pm 8.86}$ & $0.00_{\pm 0.07}$ \\
    \hline
    \rowcolor{gray!10}
    FedNCN & \textcolor{royalblue}{$\mathbf{58.84_{\pm 1.59}}$} & \textcolor{royalblue}{$\mathbf{20.15_{\pm 1.97}}$} & \textcolor{royalblue}{$\mathbf{67.92_{\pm 1.76}}$} & \textcolor{royalblue}{$\mathbf{7.64_{\pm 0.89}}$} & \textcolor{royalblue}{$\mathbf{71.85_{\pm 2.67}}$} & \textcolor{royalblue}{$\mathbf{22.14_{\pm 1.52}}$} & \textcolor{royalblue}{$\mathbf{67.63_{\pm 2.11}}$} & \textcolor{royalblue}{$\mathbf{20.32_{\pm 1.66}}$} & \textcolor{royalblue}{$\mathbf{93.19_{\pm 5.26}}$} & \textcolor{royalblue}{$\mathbf{1.11_{\pm 0.83}}$} \\
    AdaFGC (Ours) & \textcolor{darkred}{$\mathbf{63.67_{\pm 2.29}}$} & \textcolor{darkred}{$\mathbf{22.38_{\pm 1.14}}$} & \textcolor{darkred}{$\mathbf{70.54_{\pm 1.31}}$} & \textcolor{darkred}{$\mathbf{13.21_{\pm 1.61}}$} & \textcolor{darkred}{$\mathbf{80.25_{\pm 1.61}}$} & \textcolor{darkred}{$\mathbf{25.05_{\pm 2.70}}$} & \textcolor{darkred}{$\mathbf{77.60_{\pm 1.89}}$} & \textcolor{darkred}{$\mathbf{30.68_{\pm 2.14}}$} & \textcolor{darkred}{$\mathbf{94.96_{\pm 4.93}}$} & \textcolor{darkred}{$\mathbf{3.92_{\pm 0.82}}$} \\
    \hline\thickhline
    \rowcolor{gray!10}
    
    & \multicolumn{2}{c!{\vrule width 0.1pt}}{\textbf{CiteSeer}}  
    & \multicolumn{2}{c!{\vrule width 0.1pt}}{\textbf{PubMed}}  
    & \multicolumn{2}{c!{\vrule width 0.1pt}}{\textbf{Amazon-Computer}}  
    & \multicolumn{2}{c!{\vrule width 0.1pt}}{\textbf{Amazon-Photo}}  
    & \multicolumn{2}{c}{\textbf{Questions}}  \\
    
    \cline{2-11}
    \rowcolor{gray!10}
    \multirow{-2}{*}{\diagbox[width=8.5em,height=2.4em]{\textbf{Methods}}{\textbf{Datasets}}}  
    & ARI & F1 
    & ARI & F1 
    & ARI & F1 
    & ARI & F1 
    & ARI & F1 \\
    
    \hline
    FedSage+$^*$ & $0.25_{\pm 0.51}$ & $9.86_{\pm 0.82}$ & $-0.02_{\pm 0.50}$ & \textcolor{orange}{$\mathbf{30.20_{\pm 3.14}}$} & \textcolor{orange}{$\mathbf{4.64_{\pm 1.96}}$} & \textcolor{orange}{$\mathbf{20.74_{\pm 0.91}}$} & \textcolor{orange}{$\mathbf{9.61_{\pm 0.33}}$} & \textcolor{orange}{$\mathbf{26.00_{\pm 0.53}}$} & $0.56_{\pm 0.94}$ & $35.53_{\pm 0.74}$ \\
    \rowcolor{gray!10}
    FedPUB$^*$ & $0.00_{\pm 0.00}$ & $3.68_{\pm 2.00}$ & $0.00_{\pm 0.00}$ & $13.72_{\pm 3.31}$ & $0.00_{\pm 0.00}$ & $3.94_{\pm 3.39}$ & $0.00_{\pm 0.00}$ & $5.90_{\pm 3.51}$ & $-0.04_{\pm 0.07}$ & $39.24_{\pm 0.75}$ \\
    FedTAD$^*$ & $1.06_{\pm 1.34}$ & $6.64_{\pm 2.35}$ & $0.53_{\pm 0.87}$ & $19.84_{\pm 3.79}$ & $3.64_{\pm 1.01}$ & $6.78_{\pm 0.51}$ & $0.58_{\pm 1.11}$ & $9.80_{\pm 0.00}$ & $-0.31_{\pm 0.40}$ & $37.98_{\pm 12.73}$ \\
    \rowcolor{gray!10}
    FedGTA$^*$ & \textcolor{orange}{$\mathbf{4.23_{\pm 2.04}}$} & \textcolor{orange}{$\mathbf{16.79_{\pm 2.17}}$} & $0.71_{\pm 1.17}$ & $17.82_{\pm 1.61}$ & $3.39_{\pm 0.75}$ & $6.17_{\pm 0.27}$ & $7.68_{\pm 0.85}$ & $9.41_{\pm 0.33}$ & \textcolor{orange}{$\mathbf{1.18_{\pm 0.37}}$} & \textcolor{orange}{$\mathbf{46.82_{\pm 0.41}}$} \\
    FedIIH$^*$ & $0.00_{\pm 0.00}$ & $12.18_{\pm 3.23}$ & $0.00_{\pm 0.00}$ & $14.00_{\pm 1.28}$ & $0.00_{\pm 0.00}$ & $0.05_{\pm 0.10}$ & $0.00_{\pm 0.00}$ & $2.33_{\pm 1.31}$ & $0.62_{\pm 1.30}$ & $40.80_{\pm 3.67}$ \\
    \rowcolor{gray!10}
    FGSSL$^*$ & $1.57_{\pm 0.74}$ & $7.03_{\pm 2.07}$ & $6.90_{\pm 0.31}$ & $24.59_{\pm 2.46}$ & $2.69_{\pm 0.01}$ & $9.15_{\pm 1.29}$ & $3.16_{\pm 0.04}$ & $12.90_{\pm 0.46}$ & $0.13_{\pm 0.24}$ & $30.46_{\pm 2.36}$ \\
    FGGP$^*$ & $2.07_{\pm 1.47}$ & $6.89_{\pm 0.99}$ & $4.61_{\pm 0.69}$ & $29.90_{\pm 3.14}$ & $2.15_{\pm 1.66}$ & $13.81_{\pm 3.03}$ & $3.03_{\pm 0.44}$ & $10.38_{\pm 0.45}$ & $1.13_{\pm 0.09}$ & $38.41_{\pm 3.99}$ \\
    \rowcolor{gray!10}
    FedSPA$^*$ & $3.61_{\pm 0.99}$ & $12.40_{\pm 1.16}$ & \textcolor{orange}{$\mathbf{8.22_{\pm 0.38}}$} & $24.98_{\pm 2.01}$ & $1.57_{\pm 1.09}$ & $7.20_{\pm 0.59}$ & $3.19_{\pm 2.60}$ & $10.57_{\pm 1.76}$ & $0.23_{\pm 0.55}$ & $30.38_{\pm 6.14}$ \\
    S2FGL$^*$ & $2.10_{\pm 0.74}$ & $7.43_{\pm 1.27}$ & $5.84_{\pm 0.30}$ & $19.69_{\pm 4.15}$ & $1.64_{\pm 1.64}$ & $6.33_{\pm 1.61}$ & $1.52_{\pm 0.40}$ & $5.78_{\pm 1.59}$ & $0.33_{\pm 0.43}$ & $40.91_{\pm 5.66}$ \\
    \hline
    \rowcolor{gray!10}
    FedNCN & \textcolor{royalblue}{$\mathbf{18.38_{\pm 1.76}}$} & \textcolor{royalblue}{$\mathbf{33.58_{\pm 1.12}}$} & \textcolor{royalblue}{$\mathbf{10.25_{\pm 1.24}}$} & \textcolor{royalblue}{$\mathbf{42.23_{\pm 1.34}}$} & \textcolor{royalblue}{$\mathbf{26.19_{\pm 4.46}}$} & \textcolor{royalblue}{$\mathbf{24.42_{\pm 3.28}}$} & \textcolor{royalblue}{$\mathbf{17.12_{\pm 2.06}}$} & \textcolor{royalblue}{$\mathbf{33.23_{\pm 0.94}}$} & \textcolor{royalblue}{$\mathbf{3.00_{\pm 3.36}}$} & \textcolor{royalblue}{$\mathbf{52.02_{\pm 2.34}}$} \\
    AdaFGC (Ours) & \textcolor{darkred}{$\mathbf{18.43_{\pm 3.46}}$} & \textcolor{darkred}{$\mathbf{36.00_{\pm 2.52}}$} & \textcolor{darkred}{$\mathbf{15.64_{\pm 4.20}}$} & \textcolor{darkred}{$\mathbf{47.08_{\pm 3.17}}$} & \textcolor{darkred}{$\mathbf{26.11_{\pm 3.95}}$} & \textcolor{darkred}{$\mathbf{29.35_{\pm 0.71}}$} & \textcolor{darkred}{$\mathbf{32.06_{\pm 3.45}}$} & \textcolor{darkred}{$\mathbf{38.75_{\pm 1.08}}$} & \textcolor{darkred}{$\mathbf{6.54_{\pm 2.35}}$} & \textcolor{darkred}{$\mathbf{53.44_{\pm 2.16}}$} \\
    \hline\thickhline
    \end{tabular}
    }}
    \end{table*}

\subsection{Comparison with on Non-Graph Baselines}

To demonstrate the necessity of graph-specific designs in federated clustering, we compare AdaFGC with non-graph federated learning baselines that treat node features as independent samples without modeling graph structures. The baselines include standard FL methods (FedAvg~\cite{fedavg}, FedProx~\cite{fedprox}, FedPer~\cite{fedper}, SCAFFOLD~\cite{scaffold}, MOON~\cite{moon}, and FedDC~\cite{feddc}), which aggregate model parameters without considering graph topology. We also include Local training (no federation) as a reference and adopt the same base encoder with 10 clients for all methods. For comparison, we further include FedNCN. As shown in Table~\ref{tab: non_graph}, non-graph FL methods achieve limited performance, indicating that treating nodes as independent samples fails to capture relational information in graphs. Moreover, Local training can even surpass some non-graph federated methods, implying that naive federation without graph awareness may be counterproductive. In contrast, graph-aware approaches (FedNCN and AdaFGC) consistently outperform all non-graph baselines, demonstrating that explicitly modeling graph structures and node relationships is essential for effective federated graph clustering.

\begin{table*}[htbp]
    \setlength{\abovecaptionskip}{0.2cm}
    \setlength{\belowcaptionskip}{-0.2cm}
    \centering
    \caption{\textbf{Performance comparison of different non-graph baselines} on five datasets (10 clients). The best, second best and third best results are highlighted in \textcolor{darkred}{\textbf{red}}, \textcolor{royalblue}{\textbf{blue}} and \textcolor{orange}{\textbf{orange}}, respectively.}
    \label{tab: non_graph}
    \footnotesize 
    \renewcommand{\arraystretch}{1.1}
    \resizebox{\linewidth}{!}{
    \setlength{\tabcolsep}{1.2mm}{
    \begin{tabular}{c!{\vrule width 0.1pt}
    cc!{\vrule width 0.1pt}
    cc!{\vrule width 0.1pt}
    cc!{\vrule width 0.1pt}
    cc!{\vrule width 0.1pt}
    cc}
    \hline\thickhline
    \rowcolor{gray!10}
    
    & \multicolumn{2}{c!{\vrule width 0.1pt}}{\textbf{CiteSeer}}  
    & \multicolumn{2}{c!{\vrule width 0.1pt}}{\textbf{PubMed}}  
    & \multicolumn{2}{c!{\vrule width 0.1pt}}{\textbf{Amazon-Computer}}  
    & \multicolumn{2}{c!{\vrule width 0.1pt}}{\textbf{Amazon-Photo}}  
    & \multicolumn{2}{c}{\textbf{Questions}}  \\
    
    \cline{2-11}
    \rowcolor{gray!10}
    \multirow{-2}{*}{\diagbox[width=8.5em,height=2.4em]{\textbf{Methods}}{\textbf{Datasets}}}  
    & ACC & NMI 
    & ACC & NMI 
    & ACC & NMI 
    & ACC & NMI 
    & ACC & NMI \\
    
    \hline
    Local & $43.68_{\pm 1.65}$ & \textcolor{orange}{$\mathbf{8.42_{\pm 1.17}}$} & $53.76_{\pm 1.46}$ & \textcolor{orange}{$\mathbf{7.84_{\pm 1.86}}$} & $45.32_{\pm 0.29}$ & \textcolor{orange}{$\mathbf{4.70_{\pm 0.10}}$} & $47.64_{\pm 0.78}$ & \textcolor{orange}{$\mathbf{5.58_{\pm 0.49}}$} & $77.12_{\pm 1.76}$ & \textcolor{orange}{$\mathbf{0.49_{\pm 0.10}}$} \\
    \rowcolor{gray!10}
    FedAvg & \textcolor{orange}{$\mathbf{55.23_{\pm 2.73}}$} & $4.11_{\pm 0.49}$ & $62.91_{\pm 2.83}$ & $5.09_{\pm 1.66}$ & $66.28_{\pm 0.00}$ & $0.78_{\pm 0.00}$ & $62.73_{\pm 1.27}$ & $3.04_{\pm 0.78}$ & $86.52_{\pm 0.78}$ & $0.00_{\pm 0.00}$ \\
    FedProx & $44.47_{\pm 2.93}$ & $7.93_{\pm 0.88}$ & $55.81_{\pm 0.88}$ & $2.06_{\pm 0.59}$ & $42.59_{\pm 2.93}$ & $2.74_{\pm 0.29}$ & $49.12_{\pm 1.76}$ & $5.19_{\pm 0.10}$ & $83.57_{\pm 1.37}$ & $0.29_{\pm 0.00}$ \\
    \rowcolor{gray!10}
    FedPer & $48.14_{\pm 0.98}$ & $7.64_{\pm 2.25}$ & $54.81_{\pm 1.86}$ & $7.35_{\pm 0.98}$ & $49.56_{\pm 1.08}$ & $2.45_{\pm 0.68}$ & $56.32_{\pm 1.37}$ & $5.19_{\pm 0.78}$ & $81.28_{\pm 2.05}$ & $0.29_{\pm 0.10}$ \\
    SCAFFOLD & $46.82_{\pm 2.68}$ & $8.15_{\pm 0.91}$ & $57.34_{\pm 0.91}$ & $2.12_{\pm 0.61}$ & $43.76_{\pm 3.02}$ & $2.82_{\pm 0.30}$ & $50.48_{\pm 1.81}$ & $5.34_{\pm 0.10}$ & $85.91_{\pm 1.41}$ & $0.30_{\pm 0.00}$ \\
    \rowcolor{gray!10}
    MOON & $47.25_{\pm 1.01}$ & $7.86_{\pm 2.32}$ & $56.32_{\pm 1.91}$ & $7.56_{\pm 1.01}$ & $51.02_{\pm 1.11}$ & $2.52_{\pm 0.70}$ & $57.91_{\pm 1.41}$ & $5.34_{\pm 0.80}$ & $83.62_{\pm 2.11}$ & $0.30_{\pm 0.10}$ \\
    FedDC & $45.13_{\pm 2.91}$ & $4.23_{\pm 0.50}$ & \textcolor{orange}{$\mathbf{64.68_{\pm 2.91}}$} & $5.23_{\pm 1.71}$ & \textcolor{orange}{$\mathbf{68.12_{\pm 0.00}}$} & $0.80_{\pm 0.00}$ & \textcolor{orange}{$\mathbf{64.51_{\pm 1.31}}$} & $3.12_{\pm 0.80}$ & \textcolor{orange}{$\mathbf{88.92_{\pm 0.80}}$} & $0.00_{\pm 0.00}$ \\
    \hline
    \rowcolor{gray!10}
    FedNCN & \textcolor{royalblue}{$\mathbf{57.83_{\pm 3.24}}$} & \textcolor{royalblue}{$\mathbf{16.91_{\pm 1.58}}$} & \textcolor{royalblue}{$\mathbf{63.72_{\pm 1.86}}$} & \textcolor{royalblue}{$\mathbf{10.15_{\pm 1.92}}$} & \textcolor{royalblue}{$\mathbf{69.94_{\pm 2.37}}$} & \textcolor{royalblue}{$\mathbf{20.68_{\pm 3.45}}$} & \textcolor{royalblue}{$\mathbf{72.15_{\pm 2.06}}$} & \textcolor{royalblue}{$\mathbf{28.43_{\pm 2.51}}$} & \textcolor{royalblue}{$\mathbf{90.58_{\pm 5.12}}$} & \textcolor{royalblue}{$\mathbf{1.52_{\pm 0.95}}$} \\
    AdaFGC (Ours) & \textcolor{darkred}{$\mathbf{65.09_{\pm 2.41}}$} & \textcolor{darkred}{$\mathbf{24.34_{\pm 1.19}}$} & \textcolor{darkred}{$\mathbf{70.02_{\pm 1.38}}$} & \textcolor{darkred}{$\mathbf{13.87_{\pm 1.69}}$} & \textcolor{darkred}{$\mathbf{79.70_{\pm 1.69}}$} & \textcolor{darkred}{$\mathbf{26.35_{\pm 2.84}}$} & \textcolor{darkred}{$\mathbf{77.11_{\pm 1.99}}$} & \textcolor{darkred}{$\mathbf{32.27_{\pm 2.25}}$} & \textcolor{darkred}{$\mathbf{94.31_{\pm 5.19}}$} & \textcolor{darkred}{$\mathbf{4.12_{\pm 0.86}}$} \\
    \hline\thickhline
    \rowcolor{gray!10}
    
    & \multicolumn{2}{c!{\vrule width 0.1pt}}{\textbf{CiteSeer}}  
    & \multicolumn{2}{c!{\vrule width 0.1pt}}{\textbf{PubMed}}  
    & \multicolumn{2}{c!{\vrule width 0.1pt}}{\textbf{Amazon-Computer}}  
    & \multicolumn{2}{c!{\vrule width 0.1pt}}{\textbf{Amazon-Photo}}  
    & \multicolumn{2}{c}{\textbf{Questions}}  \\
    
    \cline{2-11}
    \rowcolor{gray!10}
    \multirow{-2}{*}{\diagbox[width=8.5em,height=2.4em]{\textbf{Methods}}{\textbf{Datasets}}}  
    & ARI & F1 
    & ARI & F1 
    & ARI & F1 
    & ARI & F1 
    & ARI & F1 \\
    
    \hline
    Local & $4.02_{\pm 2.05}$ & $23.14_{\pm 0.49}$ & \textcolor{orange}{$\mathbf{6.97_{\pm 1.86}}$} & $36.48_{\pm 0.98}$ & \textcolor{orange}{$\mathbf{1.86_{\pm 0.98}}$} & $15.43_{\pm 0.29}$ & $2.35_{\pm 0.88}$ & $15.53_{\pm 0.49}$ & $-0.68_{\pm 0.22}$ & $44.50_{\pm 0.20}$ \\
    \rowcolor{gray!10}
    FedAvg & $0.39_{\pm 0.78}$ & $16.26_{\pm 0.68}$ & $4.70_{\pm 0.68}$ & $23.53_{\pm 2.76}$ & $-0.20_{\pm 0.00}$ & $13.29_{\pm 0.00}$ & $1.57_{\pm 0.29}$ & $14.75_{\pm 0.59}$ & $0.00_{\pm 0.00}$ & $45.60_{\pm 0.00}$ \\
    FedProx & $3.23_{\pm 1.27}$ & $23.34_{\pm 0.68}$ & $3.14_{\pm 0.59}$ & $34.79_{\pm 0.68}$ & $0.68_{\pm 0.68}$ & $15.63_{\pm 0.39}$ & $4.12_{\pm 1.17}$ & $16.55_{\pm 0.29}$ & \textcolor{orange}{$\mathbf{1.86_{\pm 0.20}}$} & $48.89_{\pm 0.29}$ \\
    \rowcolor{gray!10}
    FedPer & \textcolor{orange}{$\mathbf{4.60_{\pm 2.93}}$} & $22.48_{\pm 0.98}$ & $6.58_{\pm 1.57}$ & $36.28_{\pm 1.17}$ & $1.08_{\pm 0.98}$ & $15.43_{\pm 0.49}$ & $4.60_{\pm 0.98}$ & $17.40_{\pm 0.68}$ & $1.57_{\pm 0.68}$ & $47.70_{\pm 0.88}$ \\
    SCAFFOLD & $3.32_{\pm 1.31}$ & \textcolor{orange}{$\mathbf{24.02_{\pm 0.70}}$} & $3.23_{\pm 0.61}$ & \textcolor{orange}{$\mathbf{38.79_{\pm 0.70}}$} & $0.70_{\pm 0.70}$ & \textcolor{orange}{$\mathbf{16.08_{\pm 0.40}}$} & $4.24_{\pm 1.21}$ & $17.02_{\pm 0.30}$ & $1.91_{\pm 0.21}$ & \textcolor{orange}{$\mathbf{50.31_{\pm 0.30}}$} \\
    \rowcolor{gray!10}
    MOON & \textcolor{orange}{$\mathbf{4.73_{\pm 3.02}}$} & $23.13_{\pm 1.01}$ & $6.77_{\pm 1.62}$ & $37.32_{\pm 1.21}$ & $1.11_{\pm 1.01}$ & $15.88_{\pm 0.50}$ & \textcolor{orange}{$\mathbf{4.73_{\pm 1.01}}$} & \textcolor{orange}{$\mathbf{17.91_{\pm 0.70}}$} & $1.62_{\pm 0.70}$ & $49.08_{\pm 0.91}$ \\
    FedDC & $0.40_{\pm 0.80}$ & $16.71_{\pm 0.70}$ & $4.83_{\pm 0.70}$ & $24.21_{\pm 2.96}$ & $-0.21_{\pm 0.00}$ & $13.67_{\pm 0.00}$ & $1.62_{\pm 0.30}$ & $15.17_{\pm 0.61}$ & $0.00_{\pm 0.00}$ & $46.90_{\pm 0.00}$ \\
    \hline
    \rowcolor{gray!10}
    FedNCN & \textcolor{royalblue}{$\mathbf{14.89_{\pm 4.38}}$} & \textcolor{royalblue}{$\mathbf{26.35_{\pm 3.17}}$} & \textcolor{royalblue}{$\mathbf{12.84_{\pm 4.89}}$} & \textcolor{royalblue}{$\mathbf{43.21_{\pm 3.78}}$} & \textcolor{royalblue}{$\mathbf{22.58_{\pm 4.92}}$} & \textcolor{royalblue}{$\mathbf{23.14_{\pm 0.96}}$} & \textcolor{royalblue}{$\mathbf{29.87_{\pm 3.76}}$} & \textcolor{royalblue}{$\mathbf{26.73_{\pm 1.34}}$} & \textcolor{royalblue}{$\mathbf{4.31_{\pm 2.58}}$} & \textcolor{royalblue}{$\mathbf{50.28_{\pm 2.43}}$} \\
    AdaFGC (Ours) & \textcolor{darkred}{$\mathbf{19.27_{\pm 3.64}}$} & \textcolor{darkred}{$\mathbf{37.85_{\pm 2.65}}$} & \textcolor{darkred}{$\mathbf{16.43_{\pm 4.41}}$} & \textcolor{darkred}{$\mathbf{49.50_{\pm 3.34}}$} & \textcolor{darkred}{$\mathbf{27.44_{\pm 4.15}}$} & \textcolor{darkred}{$\mathbf{30.86_{\pm 0.74}}$} & \textcolor{darkred}{$\mathbf{33.72_{\pm 3.63}}$} & \textcolor{darkred}{$\mathbf{30.24_{\pm 1.13}}$} & \textcolor{darkred}{$\mathbf{6.87_{\pm 2.47}}$} & \textcolor{darkred}{$\mathbf{54.05_{\pm 2.27}}$} \\
    \hline\thickhline
    \end{tabular}
    }}
    \end{table*}

\section{Baseline Details}
\label{appendix: baseline details}

\textbf{FedSage+}~\cite{fedsage_plus} extends FedSage to the subgraph federated learning setting by explicitly addressing missing cross-client neighbors. It jointly trains a GraphSAGE~\cite{graphsage} classifier with a local missing-neighbor generator that synthesizes potential cross-subgraph neighbors, enabling more complete neighborhood aggregation under federation and improving global generalization without sharing raw graph data.

\vspace{+0.1cm}
\noindent \textbf{Fed-PUB}~\cite{fedpub} is a framework for personalized subgraph FL that enhances local GNNs interdependently rather than forming a single global model. Fed-PUB computes similarities between local GNNs using functional embeddings derived from random graph inputs, facilitating weighted averaging for server-side aggregation. Additionally, it employs a personalized sparse mask at each client to selectively update subgraph-relevant parameters.

\vspace{+0.1cm}
\noindent \textbf{FedTAD}~\cite{fedtad} initially computes topology-aware node embeddings to evaluate the reliability of class-wise knowledge, transmitting this information to the server. Guided by the class-wise knowledge reliability, FedTAD performs data-free knowledge distillation on the server side to transfer reliable knowledge from local models across multiple clients to the global model.

\vspace{+0.1cm}
\noindent \textbf{FedGTA}~\cite{fedgta} innovatively merges large-scale graph learning with federated graph learning. Clients encode topology and node attributes, compute local smoothing confidence and mixed moments of neighbor features, and then upload these to the server. The server uses this data to perform personalized model aggregation, utilizing local smoothing confidence as weights for effective integration.

\vspace{+0.1cm}
\noindent \textbf{FGSSL}~\cite{fgssl} handles local client distortion in FL by focusing on node-level semantics and graph-level structures via well-designed contrastive loss functions. It enhances node discrimination by aligning local nodes with their global counterparts of the same class and distancing them from different classes. Additionally, FGSSL transforms adjacency relationships into similarity distributions, using the global model to distill relational knowledge into local models, preserving both structure and discriminability.

\vspace{+0.1cm}
\noindent \textbf{FGGP}~\cite{fggp} divides the global model into two tiers linked by prototypes. At the classifier level, FGGP replaces traditional classifiers with clustered prototypes to enhance class discrimination and multi-domain prediction accuracy. At the feature extractor level, FGGP leverages contrastive learning to imbue prototypes with global knowledge, thereby improving model generalization.

\vspace{+0.1cm}
\noindent \textbf{FedIIH}~\cite{fediih} addresses heterogeneity in federated graph learning by jointly modeling inter-client and intra-client heterogeneity. It infers subgraph distribution similarities via a hierarchical variational framework from a global perspective, while disentangling local subgraphs into multiple latent factors to enable factor-wise personalized federation, leading to robust collaboration across both homophilic and heterophilic graphs.

\vspace{+0.1cm}
\noindent \textbf{FedSPA}~\cite{fedspa} addresses homophily heterogeneity in federated graph learning by explicitly modeling both homophily conflict and homophily bias across clients. It introduces Subgraph Feature Propagation Decoupling (SFPD) to separate homophilic and heterophilic message passing, enabling collaboration under unified homophily levels, and proposes Homophily Bias-Driven Aggregation (HBDA) to adaptively weight client contributions based on spectral and parameter-sensitivity cues, thereby improving global generalization.

\vspace{+0.1cm}
\noindent \textbf{S2FGL}~\cite{s2fgl} tackles subgraph federated graph learning by jointly addressing spatial label-signal disruption and spectral client drift. It reinforces missing label semantics via prototype-based semantic sharing and aligns graph-frequency components across clients to improve robustness and generalization under heterogeneous subgraph distributions.

\vspace{+0.1cm}
\noindent \textbf{FedNCN}~\cite{fedncn} focuses on federated node-level clustering by introducing a clustering projector that produces privacy-preserving counterparts while retaining cluster-specific characteristics. It matches similar cross-client communities and repairs links between them to enhance cross-client community consistency. However, FedNCN requires pre-defining the global cluster number as the dataset-level ground-truth class count, which limits its applicability in realistic scenarios where the true cluster cardinality is unknown.

\vspace{+0.1cm}
\noindent \textbf{FedAvg}~\cite{fedavg} serves as a foundational method in FL, enabling decentralized model training across diverse devices while preserving data privacy. 
    Initiated by a central server that distributes a global model, clients independently execute local updates through stochastic gradient descent. 
    Subsequently, these updates are aggregated by the server via averaging to refine the global model, with the cycle repeating until convergence.

\vspace{+0.1cm}
\noindent \textbf{FedProx}~\cite{fedprox} allows for variable amounts of work to be performed locally across devices, and relies on a proximal term in model align loss to help stabilize the method.
Theoretically, it offers convergence guarantees under conditions of non-identical data distributions and variable device workloads. 

\vspace{+0.1cm}
\noindent \textbf{FedPer}~\cite{fedper} addresses the challenges of statistical heterogeneity by partitioning the neural network into shared base layers and local personalization layers.
Under this framework, clients collaboratively train the base layers to extract common features while independently maintaining their top layers to adapt to specific local data distributions.
The central server only aggregates and redistributes the base layer parameters, allowing each participant to retain a personalized model that is robust to non-identical data partitions across the edge.

\vspace{+0.1cm}
\noindent \textbf{SCAFFOLD}~\cite{scaffold} employs control variates to mitigate client-drift in FL. 
Demonstrating significant reductions in communication rounds, Scaffold is resilient to data heterogeneity and client sampling. 

\vspace{+0.1cm}
\noindent \textbf{MOON}~\cite{moon} is a model-contrastive FL framework that enhances local training by leveraging model representation similarities through contrastive learning at the model level.

\vspace{+0.1cm}
\noindent \textbf{FedDC}~\cite{feddc} is a novel FL algorithm that corrects local drift through lightweight modifications. 
Each client tracks the deviation between local and global model parameters using an auxiliary variable, enhancing parameter-level consistency.

\section{Evaluation Metrics}
\label{appendix: evaluation metrics}

We employ four standard evaluation metrics for clustering tasks: Clustering-accuracy (ACC), F1 Score, Normalized Mutual Information (NMI), and Adjusted Rand Index (ARI). Detailed descriptions of these metrics are as follows:

\vspace{+0.1cm}
\noindent \textbf{Clustering-accuracy (ACC)} measures the agreement between cluster assignments produced by a clustering algorithm and ground truth clustering. This metric differs from traditional accuracy metrics used in node classification, as it evaluates the overall coherence of cluster assignments rather than individual node labels. A higher clustering accuracy indicates better alignment between the identified clusters and the true underlying structure of the graph, thus reflecting the effectiveness of the clustering algorithm in uncovering meaningful communities or groups of nodes.

\vspace{+0.1cm}
\noindent \textbf{Normalized Mutual Information (NMI)} quantifies the similarity between predicted clusters and ground truth by measuring mutual information while normalizing for cluster size imbalances. NMI ranges from 0 to 1, where higher values indicate better agreement between predicted clusters and ground truth. This metric is particularly valuable in scenarios where accurately identifying community structures or functional groups within a graph is critical.

\vspace{+0.1cm}
\noindent \textbf{Adjusted Rand Index (ARI)} quantifies the similarity between predicted clusters and ground truth while accounting for chance-corrected agreement. ARI ranges from -1 to 1, where values closer to 1 indicate better agreement between predicted clusters and ground truth than random clustering.

\vspace{+0.1cm}
\noindent \textbf{F1 Score} represents the harmonic mean of Precision and Recall. This metric provides a balanced assessment of a model's performance by considering both the precision of positive predictions and the model's ability to capture all positive instances. F1 Score is particularly valuable in scenarios where achieving both high precision and high recall are equally important.

\section{Environment}
\label{appendix: environment}

All experiments are conducted on a workstation equipped with Intel Xeon Scalable processors and NVIDIA RTX 6000 Ada Generation GPUs with 96 GB of VRAM, supported by 256 GB of system RAM. The computational environment utilizes CUDA 12.9, while software implementations are developed using Python 3.10.18 and PyTorch 2.8.

\section{Limitations and Broader Impact}
\label{appendix: lim}

\textbf{Limitations.} While our method demonstrates strong performance across various datasets, several aspects warrant further investigation. First, the adaptive anchor evolution mechanism relies on iterative refinement, which may require sufficient communication rounds to converge in highly heterogeneous scenarios. Second, our current framework focuses on undirected graphs; extending to directed or temporal graphs represents an interesting future direction. Third, the computational overhead of the anchor merging process may increase with extremely large numbers of initial anchors, remaining manageable in practice.

\vspace{+0.1cm}
\noindent \textbf{Broader Impact.} This work advances federated graph clustering by enabling collaborative learning across distributed graph data while preserving privacy. The proposed method has positive societal implications in domains such as healthcare networks, social network analysis, and recommendation systems, where data privacy is critical. By allowing multiple institutions to jointly learn from their graph data without sharing raw information, our approach facilitates unsupervised knowledge discovery while respecting data sovereignty and privacy regulations. Potential negative impacts are minimal, as the method is designed for legitimate collaborative learning scenarios.

\end{document}